%% file: arxiv.tex
\documentclass{article}

\usepackage[preprint]{neurips_2026}

\usepackage[utf8]{inputenc} 
\usepackage[T1]{fontenc}    
\usepackage{hyperref}       
\usepackage{url}            
\usepackage{booktabs}       
\usepackage{amsfonts}       
\usepackage{nicefrac}       
\usepackage{microtype}      
\usepackage{xcolor}         

\usepackage{amsmath,amsfonts,amsmath,amssymb,amsthm}

\theoremstyle{plain}
\newtheorem{theorem}{Theorem}[section]
\newtheorem{proposition}[theorem]{Proposition}
\newtheorem{lemma}[theorem]{Lemma}
\newtheorem{corollary}[theorem]{Corollary}
\theoremstyle{definition}
\newtheorem{definition}[theorem]{Definition}
\newtheorem{assumption}[theorem]{Assumption}
\theoremstyle{remark}
\newtheorem{remark}[theorem]{Remark}

\usepackage{algorithm2e}

\usepackage[capitalize,noabbrev]{cleveref}
\usepackage{authblk}
\usepackage{multirow}

\title{Optimal Guarantees for Auditing R\'enyi Differentially Private Machine Learning}

\author[1]{Benjamin D. Kim}

\author[2]{Lav R. Varshney}

\author[3]{Daniel Alabi}

\affil[1,2,3]{University of Illinois Urbana-Champaign}

\affil[1]{Massachusetts Institute of Technology}

\affil[2]{Stony Brook University}

\begin{document}

\maketitle

\begin{abstract}

We study black-box auditing for machine learning algorithms that claim R\'enyi differential privacy (RDP) guarantees. We introduce an auditing framework, based on hypothesis testing, that directly estimates R\'enyi divergence between neighboring executions using the Donsker–Varadhan (DV) variational estimator. Our analysis yields explicit and non-asymptotic confidence intervals for RDP auditing via class-restricted DV estimators, separating statistical estimation error from algorithmic privacy leakage.  We prove matching minimax lower bounds showing that, up to logarithmic factors, our sample-complexity guarantees are information-theoretically optimal, thereby establishing the \textit{first} optimal guarantees for auditing RDP via DV estimators.
Empirically, we instantiate our framework for auditing DP-SGD in a fully \textbf{black-box} setting. Across MNIST and CIFAR-10, and over a wide range of privacy regimes, our auditors produce a strong overall improvement on empirical RDP lower bounds compared to prior state-of-the-art black-box methods \textit{especially at small and moderate R\'enyi orders} where accurate auditing is most challenging.

\end{abstract}

\section{Introduction}
\label{sec:intro}

\input{sections/introduction}

\subsection{Related Work}
\label{sec:relatedwork}

\input{sections/relatedwork}

\section{Preliminaries and Background}
\label{sec:prelims}

\input{sections/prelims}
\section{Theoretical Results}
\label{sec:theory}

\input{sections/theory}

\section{Experimental Results}
\label{sec:experiments}

\input{sections/experiments}

\section{Conclusion}
\label{sec:conclusion}

\input{sections/conclusion}

\clearpage

\bibliographystyle{plain}
\bibliography{references}


\clearpage
\appendix

\newpage


\input{sections/appendix}

\end{document}

%% file: sections/introduction.tex
Differential privacy (DP) has become the de facto standard for providing rigorous privacy
guarantees in machine learning~\citep{DworkMNS06,DworkKMMN06}.
Modern private learning algorithms, most notably differentially private stochastic gradient descent (DP-SGD), 
are routinely deployed with privacy guarantees, expressed in terms of \emph{R\'enyi differential privacy}
(RDP), due to its tight composition guarantees and its central role in state-of-the-art privacy
accounting~\citep{abadi2016deep,mironov2017renyi}. Yet, as DP systems transition from theory to practice,
a fundamental question has become increasingly urgent:
\emph{How can we reliably verify, from finite empirical observations, that a learning algorithm actually
satisfies its claimed privacy guarantees?}

This question has given rise to a rapidly growing literature on \emph{privacy auditing}~\citep{jagielski2020auditing,steinke2023privacy}.
Empirical audits are used to estimate privacy leakage by distinguishing outputs of a learning algorithm on
neighboring datasets, thereby producing lower bounds on the true privacy parameters~\citep{DingWWZK18}.
Such audits play a critical role in diagnosing implementation bugs, assessing how tight the theoretical
analysis is, and building trust in deployed private learning systems~\citep{DKM19,AGMSW25}.
However, despite this progress, theoretical foundations of privacy auditing remain incomplete, particularly
for RDP.


\begin{figure}
    \begin{center}
    \includegraphics[width=0.6\linewidth]{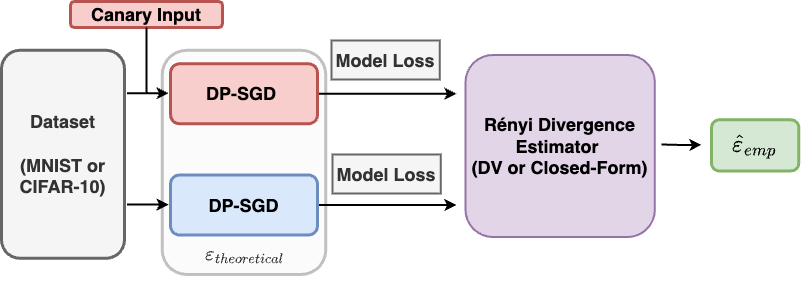}
    \caption{Auditing procedure in Algorithm~\ref{alg:bb-audit}.}
    \label{fig:overview}
    \end{center}
\end{figure}

\paragraph{The gap between empirical auditing and RDP guarantees}
Most existing auditing methods are tailored to
the original (pure and approximate) DP definitions and rely on specific attack constructions such as membership inference and
data poisoning~\citep{DingWWZK18,jagielski2020auditing}.
While effective, these approaches could lack finite-sample guarantees that cleanly separate estimation error
from algorithmic privacy leakage. 
In particular, it is currently unclear how accurately RDP parameters can be estimated from finite samples, whether
existing methods are optimal, and how auditing error should scale with model complexity.

\paragraph{Black-box privacy auditing}
Tight audits for DP-SGD exist in the white-box auditing setting where an adversary can observe and update intermediate gradients \cite{nasr2023tight}. In contrast, in this work we consider a black-box auditing setting, where an adversary can only insert an input canary and observe a (post-processing of a)
final trained model. We achieve tighter audits than previous works, \cite{jagielski2020auditing, nasr2023tight, steinke2023privacy, muthu2024nearly}. We follow
the method in \cite{muthu2024nearly} of crafting worst-case initial model parameters, since DP-SGD's privacy is unaffected by the choice of initial parameters before private training and is still considered black-box DP auditing. The approach to doing so by pretraining on a separate part of the dataset is described in Section~\ref{sec:experiments}. In short, this worst-case initialization makes training as sensitive as possible to whether the canary is present, leading to a greater R\'enyi divergence in model outputs trained with and without the canary for us.

\paragraph{Our approach: Hypothesis testing via the variational R\'enyi divergence}
In this work, we introduce a \emph{new framework for auditing R\'enyi differentially private mechanisms}
based on hypothesis
testing.
We utilize 
the Donsker-Varadhan (DV) variational representation of the R\'enyi divergence.
We view privacy auditing as a statistical estimation problem: given samples from the output distributions of a 
mechanism evaluated on neighboring datasets, the auditor seeks to estimate (or certify a lower bound on) the
corresponding R\'enyi divergence using the DV representation.
This perspective allows us to directly connect empirical auditing to
formal definitions of RDP, without relying on specific attacks or asymptotic approximations.
Our analysis yields explicit, non-asymptotic confidence intervals for class-restricted R\'enyi divergence estimates
obtained via DV-based variational objectives.
These guarantees hold uniformly over parameterized critic classes and make precise the dependence on sample size,
R\'enyi order, and function-class complexity. We can obtain statistically valid RDP audits with controlled
Type-I error, converting empirical evidence into privacy violation certificates. 




This paper develops a rigorous, non-asymptotic framework for auditing
RDP using variational (DV-based) estimators.
\textbf{Our contributions are summarized as follows:}
(i) We are the \textit{first} to directly audit R\'enyi differential privacy, establishing a state-of-the-art black-box privacy auditing method to estimate the R\'enyi divergence between neighboring dataset mechanism outputs.
Our R\'enyi divergence estimation methods between loss distributions trained with and loss distributions trained without a canary use a variational representation approach with neural networks to estimate a lower bound.
(ii) We introduce a framework, based on hypothesis testing, for auditing RDP via variational 
R\'enyi divergence (Section~\ref{sec:hypothesis-testing-renyi-dp}). We derive explicit, non-asymptotic confidence intervals for DV-based R\'enyi divergence estimators over
restricted critic classes (Theorem~\ref{thm:finite-sample-dv-renyi} and
Corollary~\ref{cor:dp-auditing-dv-renyi-app}).
We provide matching lower bounds, establishing the optimality of our auditing guarantees (Theorem~\ref{thm:dv-renyi-lower-bound}), showing that, up to logarithmic and constant factors,
the sample complexity in Theorem~\ref{thm:finite-sample-dv-renyi} is minimax optimal.
(iii) We validate our theory empirically on private machine learning systems, demonstrating accurate and robust
RDP audits in practice (Section~\ref{sec:experiments}). 
Compared to state-of-the-art black-box auditors, our work yields empirical estimates
that significantly improve over prior work.

%% file: sections/relatedwork.tex
A growing body of work studies how to empirically \emph{audit} DP machine learning
algorithms by estimating privacy leakage from observed outputs of the algorithms~\citep{ShokriSSS17,AD24,muthu2024nearly,DingWWZK18,GuanFHY23,YMMBS22,KMRS24}.
Earlier work introduced data poisoning attacks as a way to audit DP-SGD, demonstrating empirical lower bounds on
the privacy parameters can substantially exceed naive theoretical analysis~\citep{jagielski2020auditing}.
Subsequent works refined this line of research by improving tightness, reducing the number of required
training runs, or adapting to
different threat models:
~\cite{nasr2023tight} proposed tight auditing schemes that exploit knowledge of the underlying
DP mechanism to dramatically reduce sampling complexity whereas~\cite{steinke2023privacy, mahloujifar2024auditing}  showed that
meaningful DP audits can be performed using a single training run by exploiting connections between DP and
generalization. While these approaches can be effective in practice, they primarily focus on
pure or approximate DP guarantees. In contrast, our work targets
\emph{R\'enyi differential privacy} and develops a principled distribution-free statistical framework that
yields \emph{finite-sample confidence intervals} and
\emph{minimax-optimal guarantees} for auditing R\'enyi divergence directly.

Several works aim to detect DP violations or compute empirical lower bounds on privacy parameters~\citep{DingWWZK18,GuanFHY23,YMMBS22}.
\cite{DingWWZK18} extend previous formulations of DP in terms of hypothesis tests~\citep{kairouz2016extremal,kairouz2015composition} in order to detect counterexamples to DP, whereas
DP-Finder~\citep{BichselGDTV18} is used to search for DP violations as a sampling-and-optimization problem.
These tools are powerful for debugging and falsifying incorrect implementations, but they are not
designed to provide tight, sample-valid confidence bounds for \emph{correct} algorithms, nor do they address RDP
auditing or optimality guarantees.
Our approach is complementary: rather than searching for explicit counterexamples, we characterize the
\emph{best possible accuracy} of any auditor via information-theoretic lower bounds, thereby establishing
fundamental limits for RDP auditing.

There have been several previous works on variational representations of information-theoretic divergences \cite{donsker1983asymptotic, anantharam2018variational} and more recently on approximating a lower bound via neural networks with the most prominent being mutual information neural estimation (MINE) \cite{belghazi2018mine}. Recent applications \cite{esfahanizadeh2023infoshape,kim2024crypto, kim2025cryptanalysis} use these networks to perform security and privacy analyses. Our work is the first to not only perform an empirical DP audit using these divergences, but also validate the findings with theoretical confidence bounds for both the DP estimate and lower bound estimate through neural networks.

%% file: sections/prelims.tex
\subsection{R\'enyi Divergence Definition and Estimation}
\label{sec:rdp-definition}



\begin{definition}[R\'enyi divergence]
    The R\'enyi divergence of order $\alpha>1$ between two probability distributions $P$ and $Q$ defined over $\mathbb{R}$ is
\(
D_{\alpha}(P \,\|\, Q)
= \frac{1}{\alpha-1}\,
\log \,\mathbb{E}_{x \sim Q}\!\left[\left(\frac{P(x)}{Q(x)}\right)^{\alpha}\right],
\)
where all logarithms are natural; $P(x)$ and $Q(x)$ denote densities with respect to a common reference measure.
\label{def:rd}
\end{definition}
Measuring the R\'enyi divergence and other information-theoretic divergences (such as Kullback-Leibler) has presented a challenge in high-dimensional random variables until recently. The previous work of\cite{donsker1983asymptotic} presented a variational form of the KL divergence as a supremum over functions, later extended to the R\'enyi divergence by \cite{anantharam2018variational}. The idea here is to use neural networks to approximate the supremum of the class of functions, and use that approximation to calculate a lower bound of divergences \cite{belghazi2018mine, birrell2021variational}. 
\begin{definition}[Variational representations of R\'enyi divergence]
Let $P$ and $Q$ be probability measures on $(\Omega, \mathcal{M})$, $Q \ll P$, and $\alpha \in \mathbb{R},\ \alpha \neq 0, 1$. Then for any set of functions, $\Gamma$, with $\mathcal{M}_b(\Omega) \subset \Gamma \subset \mathcal{M}(\Omega)$ (where $\mathcal{M}(\Omega)$ denotes the set of all real-valued measurable functions on $\Omega$) we have
$
R_\alpha(Q\|P)
= \sup_{g \in \Gamma} \left\{ 
\frac{1}{\alpha - 1} \log\left[ \int e^{(\alpha - 1) g} \, dQ \right]
- \frac{1}{\alpha} \log\left[ \int e^{\alpha g} \, dP \right]
\right\}
$
or
$
=\sup_{g\in\Gamma}
\{\frac{1}{\alpha-1}\log \mathbb{E}_{X\sim Q}\!\left[e^{(\alpha-1)g(X)}\right] -$
$\frac{1}{\alpha}\log \mathbb{E}_{Y\sim P}\!\left[e^{\alpha g(Y)}\right]\}.
$
\end{definition}

To convert to the traditional R\'enyi divergence, the measure must be scaled by $\alpha$ so $\boldsymbol{D_\alpha(P||Q) = \alpha R_\alpha(P||Q)}$. Throughout the work we refer to the variational expression above as $R_\alpha$ and the true R\'enyi divergence as $D_\alpha$. Much like \cite{birrell2021variational}, we parameterize the variational function class with a neural network and optimize the resulting sample-based objective via stochastic gradient descent. For improved numerical stability, we use mini-batching and exponential moving average (EMA) from MINE \cite{belghazi2018mine}. EMA is a running average that mixes the current observation with a previous EMA providing stronger stability and less variance. Mini-batching allows us to optimize using Monte Carlo estimates of expectations, but can have bias and high variance due to the exponential terms in the loss functions. This issue is even more prevalent if one estimates the R\'enyi divergence with an order of $\alpha$. In attempts to remedy the bias and variance we follow MINE's approach of maintaining an EMA that averages the exponent term over several mini batches and therefore stabilizes the resulting stochastic gradients. This is especially helpful when estimating at a large order of $\alpha$ for a R\'enyi divergence. Variants using this formula in \cite{birrell2021variational} demonstrate a high mean squared error at larger $\alpha$'s, so we limit our experimental evaluation for the auditor to $\alpha \in (1, 2]$, a regime in which the estimator remains relatively accurate.

For our neural network, we choose $\Gamma_{\Theta} := \{\, T_{\theta} : \Omega \to \mathbb{R} \;:\; \theta \in \Theta \,\} \subseteq \Gamma$,
parametrized by a deep neural network with parameters $\theta \in \Theta$, the statistics network. We obtain the lower bound:
\(
R_\alpha(Q\|P) \ge R^\Theta_\alpha(Q\|P),
\)
where 
\(
R^\Theta_\alpha(Q\|P) =
\sup_{\theta\in\Theta}
\big\{
\frac{1}{\alpha-1}\log \mathbb{E}_{ Q}\!\left[e^{(\alpha-1)T_\theta}\right]
-
\frac{1}{\alpha}\log \mathbb{E}_{P}\!\left[e^{\alpha T_\theta}\right]
\big\}.
\)
The implementation for our R\'enyi divergence estimator is  \cref{alg:estimation}. The proof for the formula can be found in \cite[Section 6.2]{birrell2021variational}. Theoretical properties follow those in \cite{birrell2021variational, belghazi2018mine}, with specific analysis for DP included in later sections.

\begin{algorithm}[t]
\DontPrintSemicolon
\caption{DV-R\'enyi Divergence Estimation}
\label{alg:estimation}
\KwIn{Order $\alpha>1$, minibatch size $b$, EMA rate $\beta\in(0,1)$, step size $\eta$}
\KwOut{Lower-bound estimate $R^\Theta_\alpha(Q\|P)$}

$\theta \leftarrow$ initialize statistics network parameters\;
$m_Q \leftarrow 0$;\quad $m_P \leftarrow 0$\;

\Repeat{convergence}{
    Draw $b$ minibatch samples from $Q$:\quad $x^{(1)},\dots,x^{(b)} \sim Q$\;
    Draw $b$ minibatch samples from $P$:\quad $y^{(1)},\dots,y^{(b)} \sim P$\;

    Evaluate the lower-bound:
    \begin{align*}
        \mathcal{V}(\theta)\leftarrow
        &\frac{1}{\alpha-1}\log\!\Big(\frac{1}{b}\sum_{i=1}^b e^{(\alpha-1)T_\theta(x^{(i)})}\Big)\\
        &-\frac{1}{\alpha}\log\!\Big(\frac{1}{b}\sum_{i=1}^b e^{\alpha T_\theta(y^{(i)})}\Big)
    \end{align*}

    Evaluate bias corrected gradients (e.g., moving average):

    $\widetilde{G}(\theta)\leftarrow \widetilde{\nabla}_\theta\,\mathcal{V}(\theta)$\;

    Update the statistics network parameters (gradient ascent):
    $\theta \leftarrow \theta + \eta\,\widetilde{G}(\theta)$\;
}
\end{algorithm}

\subsection{Differential Privacy}
We now introduce DP and RDP.  We define $(\varepsilon,\delta)$-  differential privacy~\citep{DworkMNS06,DworkKMMN06}
as follows. 
\begin{definition}[Differential Privacy]
   A randomized privacy mechanism $M : \mathcal{D} \to \mathcal{R}$ satisfies $(\varepsilon, \delta)$-DP if for any adjacent $D, D' \in \mathcal{D}$ and any measurable $S \subseteq \mathcal{R}$ it holds that
\[
\Pr\!\big[M(D) \in S\big] \le e^{\varepsilon}\, \Pr\!\big[M(D') \in S\big] + \delta,
\]
where $0\le \delta \le1$.
\end{definition}

\begin{definition}[R\'enyi differential privacy]
    A randomized privacy mechanism $M : \mathcal{D} \to \mathcal{R}$
    satisfies $(\alpha, \varepsilon_\alpha)$  R\'enyi-DP \cite{mironov2017renyi}
    if for any adjacent $D, D' \in \mathcal{D}$ it holds that
\[
D_{\alpha} \big( M(D) \,\|\, M(D') \big) \leq \varepsilon_\alpha.
\]
\end{definition}
Note that as mentioned above, $\alpha \rightarrow \infty$ satisfies $\varepsilon$ differential privacy. Further, we can then convert from RDP to $(\varepsilon, \delta)$-DP using the formula provided in Proposition 3 in \cite{mironov2017renyi}. For a mechanism that satisfies $(\alpha, \varepsilon_\alpha)$ RDP, the mechanism also satisfies $(\varepsilon_\alpha + \frac{\log 1/ \delta}{\alpha -1}, \delta)$-DP for $0\le \delta \le 1$.

Since the R\'enyi Divergence is not symmetric, one takes the supremum of the divergence of all neighboring datasets mechanism outputs to see if the mechanism satisfies $(\alpha, \varepsilon_\alpha)$-RDP.

    
    \begin{algorithm}
    \caption{DP\text{-}SGD — Differentially Private Stochastic Gradient Descent \cite{abadi2016deep}}
    \label{alg:dpsgd}
    \DontPrintSemicolon
    \SetAlgoLined
    \SetKwInOut{Input}{Input}
    \SetKwInOut{Model}{Model}
    \SetKwInOut{Params}{Parameters}
    
    \Input{$x \in \mathcal{X}^n$}
    \Model{Loss function $f : \mathbb{R}^d \times \mathcal{X} \to \mathbb{R}$}
    \Params{number of iterations $\ell \ge 1$; clipping threshold $c>0$; noise multiplier $\sigma>0$; sampling probability $q \in (0,1]$; learning rate $\eta>0$}
    
    \BlankLine
    Initialize $w^{0} \in \mathbb{R}^d$.\;
    \For{$t = 1, \dots, \ell$}{
      Sample $S^{t} \subseteq [n]$ with each $i \in [n]$ included independently with probability $q$.\;
      Compute per-example gradients $g_i^{t} \gets \nabla_{w^{t-1}} f\!\left(w^{t-1}, x_i\right)$ for all $i \in S^{t}$.\;
      Clip $\hat g_i^{t} \gets \min\!\left\{1,\; \frac{c}{\lVert g_i^{t}\rVert_2}\right\} g_i^{t}$ for all $i \in S^{t}$.\;
      Sample $\xi^{t} \sim \mathcal{N}\!\left(0,\; \sigma^{2} c^{2} I\right)$.\;
      Aggregate $\tilde g^{t} \gets \xi^{t} + \sum_{i \in S^{t}} \hat g_i^{t}$.\;
      Update $w^{t} \gets w^{t-1} - \eta \cdot \tilde g^{t}$.\;
    }
    \Return{$w^{0}, w^{1}, \dots, w^{\ell}$}
    \end{algorithm}
   

\subsection{Hypothesis Testing and Statistical Estimation}

As shown in prior works, we would like to frame our auditing result as a statistical estimation problem with a bounded estimation on the privacy parameters.

To begin, we start by viewing this in terms of hypothesis testing. Our null hypothesis is we have a privacy mechanism $f : \mathcal{D} \to \mathcal{R}$ that for any adjacent $D, D' \in \mathcal{D}$, satisfies $(\alpha, \varepsilon_\alpha)$-RDP if
\(
D_{\alpha} \big( M(D) \,\|\, M(D') \big) \leq \varepsilon_\alpha.
\)
Our audit's goal is to test this hypothesis by running algorithm $M$. The output of the algorithm is $\widehat R_\alpha$. If $M$ satisfies $(\alpha, \varepsilon_\alpha)$-RDP, with probability $1- \beta$ we have $\widehat R_\alpha \le \varepsilon_\alpha$. To view this in terms of hypothesis testing, start with the null hypothesis that the output of $M$ satisfies $(\alpha, \varepsilon_{\text{null}})$-RDP. Let the hypothesis test's rejection set be $T_{\varepsilon, \alpha, \beta}$. If the null hypothesis is rejected, then we have a lower bound at $\widehat R_\alpha$. To convert between hypothesis testing and statistical estimation, consider the following lemma.
\begin{lemma}[See Lemma~\ref{lem:conversion-app}]
\label{lem:conversion}
For each $M$ and a fixed $\alpha > 1$, let $A_M \in \Omega$ be a random variable and let $P_M \in \mathbb{R}$ be a fixed number.
For each $\varepsilon,\beta>0$, let $T_{\varepsilon,\alpha,\beta}\subset \Omega$ satisfy
\begin{equation}
\forall M\qquad
\Bigl(P_M=\varepsilon \;\Longrightarrow\; \mathbb{P}\bigl[A_M \in T_{\varepsilon,\alpha,\beta}\bigr]\le \beta\Bigr).
\end{equation}
Further suppose that, if $\varepsilon_1 \le \varepsilon_2$, then
$T_{\varepsilon_1,\alpha,\beta} \supset T_{\varepsilon_2,\alpha,\beta}$.
Then, for all $M$ and all $\beta>0$,
\begin{equation}
\Pr\!\left\{\,P_M \ge
\sup\{\varepsilon>0 : A_M \in T_{\varepsilon,\alpha,\beta}\}\right\}
\ge 1-\beta.
\end{equation}
\end{lemma}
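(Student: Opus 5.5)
The plan is a direct argument by contraposition that uses only the two hypotheses: the Type-I control at the true value $\varepsilon = P_M$, and the nesting of the rejection sets. Fix a mechanism $M$ and $\beta>0$, write $\varepsilon^\ast := P_M$, and set $\widehat{\varepsilon} := \sup\{\varepsilon>0 : A_M \in T_{\varepsilon,\alpha,\beta}\}$. I will show that the failure event $\{P_M < \widehat{\varepsilon}\}$ is contained in the event $\{A_M \in T_{\varepsilon^\ast,\alpha,\beta}\}$. The first hypothesis, applied with $\varepsilon = \varepsilon^\ast$, bounds the probability of the latter by $\beta$. Taking complements then gives the claimed bound $\Pr\{P_M \ge \widehat{\varepsilon}\} \ge 1-\beta$.

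The key step is the inclusion itself. Suppose $P_M < \widehat{\varepsilon}$. By the definition of the supremum there is some $\varepsilon' > 0$ with $\varepsilon' > \varepsilon^\ast$ and $A_M \in T_{\varepsilon',\alpha,\beta}$. Since $\varepsilon^\ast \le \varepsilon'$, the monotonicity assumption gives $T_{\varepsilon^\ast,\alpha,\beta} \supset T_{\varepsilon',\alpha,\beta}$, and hence $A_M \in T_{\varepsilon^\ast,\alpha,\beta}$. It matters that we use a witness strictly above $\varepsilon^\ast$ rather than the supremum itself, since the supremum need not be attained. Only the existence of some element of the set exceeding $\varepsilon^\ast$ is needed. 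The inclusion is deterministic, pointwise in the outcome of $A_M$, so no measurability of $\widehat{\varepsilon}$ beyond that of the events involved is required.

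The conclusion is then
\[
\Pr\{P_M < \widehat{\varepsilon}\} \le \Pr\{A_M \in T_{\varepsilon^\ast,\alpha,\beta}\} \le \beta,
\]
where the last inequality is the hypothesis instantiated at $\varepsilon = P_M$.

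The only real obstacle is bookkeeping at the boundary. The hypothesis is stated for $\varepsilon>0$, so invoking it at $\varepsilon=\varepsilon^\ast$ needs $P_M>0$. I handle this in two cases.
\begin{itemize}
\item If $P_M > 0$, the argument above applies directly.
\item If $P_M \le 0$, I either note that the hypothesis is to be read for all values $\varepsilon = P_M$, which is how the lemma quantifies it ("$P_M=\varepsilon \Rightarrow \dots$"), or extend the family by $T_{\varepsilon,\alpha,\beta} := T_{\varepsilon_0,\alpha,\beta}$ for small $\varepsilon_0$. The first reading is cleaner and is what the quantification over $M$ suggests. In the RDP application $P_M = D_\alpha(M(D)\|M(D')) \ge 0$ anyway.
\end{itemize}
I also fix the convention $\sup\emptyset = -\infty$ (or $0$). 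With this convention the failure event is empty when no test rejects, so the bound holds trivially in that case.
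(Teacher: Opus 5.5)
Your argument is correct and is essentially the paper's proof. Both show that the failure event $\{P_M<\widehat{\varepsilon}\}$ is contained in $\{A_M\in T_{P_M,\alpha,\beta}\}$, using a rejecting level above $P_M$ and the nesting of the rejection sets (the paper writes this as $\bigcup_{\varepsilon\ge P_M}T_{\varepsilon,\alpha,\beta}=T_{P_M,\alpha,\beta}$), and then invoke the Type-I hypothesis at $\varepsilon=P_M$.

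One caveat on your boundary remark: padding the family by $T_{\varepsilon,\alpha,\beta}:=T_{\varepsilon_0,\alpha,\beta}$ for $\varepsilon\le 0$ does not work. Nothing bounds $\Pr[A_M\in T_{\varepsilon_0,\alpha,\beta}]$ when $P_M\neq\varepsilon_0$, so the case $P_M\le 0$ needs either $P_M>0$ assumed outright (as the paper tacitly does), or the Type-I and nesting hypotheses imposed at nonpositive levels as an extra assumption rather than as a reading of the stated one.
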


The proof can be interpreted as the following: $A_M$ is the auditing procedures output when applied to mechanism $M$ that we are auditing, $P_M$ is the true epsilon, $\varepsilon_{true}$ that the mechanism satisfies. For Eq~\ref{eqn:union}, if the null hypothesis is true, that is $M$ is $(\alpha, \varepsilon)$-RDP the probability we reject it is at most $\beta$. We estimate the true privacy parameter by taking the largest $\varepsilon$ for which the null hypothesis can still be rejected. By properties of RDP, we can make the monotonicity assumption that was used in the proof, that if we can reject $\varepsilon_2$ and $\varepsilon_1 \le \varepsilon_2$, we can also reject $\varepsilon_1$.

\subsection{Black-Box Auditing Algorithm}
In the black-box setting an adversary does not have intermediate access to a mechanism, and only can view/choose inputs and outputs. More on choosing specific model parameters and inputs can be found in our discussion. For our black-box auditing algorithm, we follow the standard algorithms shown in \cite{nasr2023tight, muthu2024nearly}. In these works, one trains the model under DP-SGD both with and without the canary input. Lastly, we measure the R\'enyi divergence of the output losses either using a variational or numerical approach. The process is depicted in Algorithm~\ref{alg:bb-audit}.

\begin{algorithm}
\caption{Black-box auditing for DP-SGD with Renyi divergence}
\label{alg:bb-audit}
\textbf{Args:} training dataset $D$, loss function $\ell$, canary input $(x',y')$, number of observations $T$\\
\textbf{Observations:} $O \gets \{\}$,\\ $O' \gets \{\}$
\For{$t = 1$ to $T$}{
    $\theta \gets \text{DP\mbox{-}SGD on dataset } D$\\
    $\theta' \gets \text{DP\mbox{-}SGD on dataset } D \cup \{(x',y')\}$\\
    $O[t] \gets \ell(\theta,\,(x',y'))$\\
    $O'[t] \gets \ell(\theta',\,(x',y'))$
}
Using each entry from 1 to $T$ in $O$, $O'$ and \texttt{Renyi} loss\;
\textbf{Estimate} for $\varepsilon_\alpha$: $\widehat R_\alpha$ = $\max\{R_\alpha$($O$, $O'),R_\alpha$ $(O'$, $O$)\}\;
\textbf{Return} $\widehat R_\alpha, \alpha$
\end{algorithm}

%% file: sections/theory.tex
In this section we present the main theoretical contributions of the paper.

\subsection{Hypothesis Testing}
\label{sec:hypothesis-testing-renyi-dp}
Our test statistic for our auditing procedure is the empirical $\varepsilon_\alpha$ estimate, $\widehat R_\alpha$ obtained by running algorithm $M$. We define our null hypothesis as for an $\varepsilon_0 > 0$, $\alpha_0 > 1$
\(
D_{\alpha_0} \big( M(D) \,\|\, M(D') \big) \leq \varepsilon_0,
\)
$\forall \varepsilon_\alpha \ge \varepsilon_0, \alpha \le \alpha_0$. To put it concretely, the null hypothesis is that the R\'enyi divergence at order $\alpha_0$ satisfies $(\alpha_0, \varepsilon_0)$-RDP. Consequently, is also satisfies $(\alpha, \varepsilon_\alpha)$-RDP, $\forall \varepsilon_\alpha \ge \varepsilon_0, \alpha \le \alpha_0$.
This means our alternative hypothesis is for an $\varepsilon_1 > 0$, $\alpha_1 > 1$
\(
D_{\alpha_0} \big( M(D) \,\|\, M(D') \big) > \varepsilon_1,
\)
The alternative hypothesis is that the R\'enyi divergence at order $\alpha_0$ does not satisfy $(\alpha_0, \varepsilon_0)$-RDP. Consequently, it also does not hold for $(\alpha, \varepsilon_\alpha)$-RDP, for all $\varepsilon_\alpha \le \varepsilon_0, \alpha \ge \alpha_0$. (See Lemma~\ref{lem:conversion}.)

\subsection{R\'enyi-DP Lower Bound Analysis}
\label{sec:rdp-analysis-lb}

In this section, we provide high-probability lower
confidence bounds for the true R\'enyi divergence
$D_\alpha(P\|Q)$ based on the empirical estimator computed by
our auditing procedure. In Section~\ref{sec:rdp-analysis-ub}, we also
discuss upper bounds.

Let $X_1,\dots,X_n \sim Q$
denote the independent loss evaluations drawn under the
canary-absent dataset, and write
\(
L(x) = \log\left(\frac{p(x)}{q(x)}\right),
\qquad
Z_i \;=\; e^{\alpha L(X_i)}, 
\widehat{Z} \;=\; \frac{1}{n} \sum_{i=1}^n Z_i,
\)
so that the empirical R\'enyi estimate returned by the auditor is
\(
\widehat{D}_\alpha(P\|Q)
\;=\;
\frac{1}{\alpha - 1}\log \widehat{Z}.
\)
Recall that
$\mu := \mathbb{E}_Q[Z_i]
= \exp\!\big( (\alpha-1) D_\alpha(P\|Q) \big)$, where $\mathbb{E}_Q[Z_i]$ is with respect to $Q$.

We now state finite-sample guarantees for estimating
$D_\alpha(P\|Q)$.  The lower bound (Theorem~\ref{thm:lower})
follows from a direct application of Markov's inequality.  
In contrast, an \emph{upper}
confidence bound requires additional structural assumptions on the
privacy-loss random variable; without such assumptions,
no nontrivial distribution-free upper bound is possible for general
nonnegative random variables.  
See Section~\ref{sec:rdp-analysis-ub} for a discussion on upper bounds.

\paragraph{Lower confidence bound.}
We first recall the standard lower-tail guarantee for the empirical
R\'enyi estimator.

\begin{theorem}[Lower bound for R\'enyi-DP estimate]
\label{thm:lower}
For any $\beta \in (0,1)$, with probability at least $1-\beta$,
\(
D_\alpha(P\|Q)
\;\ge\;
\widehat{D}_\alpha(P\|Q)
\;-\;
\frac{1}{\alpha - 1}\log\left(\frac{1}{\beta}\right).
\)
\end{theorem}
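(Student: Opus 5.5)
The plan is a single application of Markov's inequality to the nonnegative empirical mean $\widehat{Z}$, followed by taking logarithms. First I will pin down the mean. Since $Z_i = e^{\alpha L(X_i)} = (p(X_i)/q(X_i))^{\alpha}$ with $X_i \sim Q$, Definition~\ref{def:rd} gives
\[
\mathbb{E}_Q[Z_i] = \mathbb{E}_{x\sim Q}\!\left[\left(\tfrac{p(x)}{q(x)}\right)^{\alpha}\right] = \exp\!\big((\alpha-1)D_\alpha(P\|Q)\big) = \mu .
\]
By linearity, $\mathbb{E}_Q[\widehat{Z}] = \mu$. This uses only that the $X_i$ are each distributed as $Q$; independence is not needed.

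If $\mu = \infty$, then $D_\alpha(P\|Q) = \infty$ and the claimed inequality holds trivially. So I will assume $\mu < \infty$. Each $Z_i \ge 0$, hence $\widehat{Z} \ge 0$, and Markov's inequality applies:
\[
\Pr\!\left[\widehat{Z} \ge \mu/\beta\right] \le \frac{\beta\,\mathbb{E}[\widehat{Z}]}{\mu} = \beta .
\]
Therefore, with probability at least $1-\beta$, we have $\widehat{Z} < \mu/\beta$.

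On this event I take logarithms, which is legitimate because $\log$ is monotone; the case $\widehat{Z}=0$ makes $\widehat{D}_\alpha = -\infty$ and is trivial. This gives
\[
\log \widehat{Z} < (\alpha-1)D_\alpha(P\|Q) + \log(1/\beta).
\]
Dividing by $\alpha - 1 > 0$ preserves the direction of the inequality. Recalling $\widehat{D}_\alpha(P\|Q) = \frac{1}{\alpha-1}\log\widehat{Z}$, this yields exactly the bound in Theorem~\ref{thm:lower}.

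There is no real obstacle here. The only points needing care are the edge cases: $\mu$ infinite (trivial conclusion), $\widehat{Z}=0$ (trivial conclusion), and the sign of $\alpha-1$, which is positive by the standing assumption $\alpha>1$. I will also note that the bound is distribution-free and does not improve with $n$, since Markov's inequality only uses the first moment. This is consistent with the paper's remark that sharper or upper bounds require additional structure on the privacy-loss variable.
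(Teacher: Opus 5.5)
Your proposal is correct and follows the same route as the paper. You apply Markov's inequality to the nonnegative average $\widehat{Z}$, whose mean is $\mu=\exp((\alpha-1)D_\alpha(P\|Q))$, to get $\mu\ge\beta\widehat{Z}$ with probability at least $1-\beta$, and then take logarithms and divide by $\alpha-1>0$. Your handling of the edge cases ($\mu=\infty$, $\widehat{Z}=0$) and your remark that independence is unnecessary are small additions the paper omits. The Markov step with threshold $\mu/\beta$ also needs $\mu>0$, but when $\mu=0$ we have $\widehat{Z}=0$ almost surely, so your trivial case already covers it.
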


\begin{proof}
By Markov's inequality applied to $\widehat{Z}$ and the identity
$\mu = \exp\!\big((\alpha-1)D_\alpha(P\|Q)\big)$, one obtains
$\Pr[\widehat{Z} > \mu/\beta] \leq \beta$ so that with probability $\geq 1-\beta$,
$\mu\geq \beta\widehat{Z}$. Taking logarithms, we obtain
\[
D_\alpha(P\|Q) = \frac{1}{\alpha-1}\log \mu \geq \frac{1}{\alpha-1}\log\widehat{Z} + \frac{1}{\alpha-1}\log\beta 
\]
\[
=\widehat{D}_\alpha(P\|Q) - \frac{1}{\alpha-1} \log\frac{1}{\beta}.
\]
\end{proof}

We can use the result we just solved for in the following way: Say we run our audit with output $\widehat{R_\alpha}$, Following Lemma~\ref{lem:conversion}, choose a desired confidence level $0 \leq 1-\beta < 1$, and add $\frac{1}{\alpha-1}\log{\beta}$ to $\widehat{R_\alpha}$. Since if $\varepsilon_1\le \varepsilon_2$, $T_{\varepsilon_1,\alpha, \beta}\supset T_{\varepsilon_2,\alpha,\beta}$, choose $\varepsilon = \widehat R_\alpha + \frac{1}{\alpha-1}\log{\beta}$ for the lower bound. 

\subsection{DV R\'enyi Analysis}

In this section, we solve for a non-asymptotic/finite-sample confidence bound for the DV R\'enyi representation and show that the bound is tight (up to polylogarithmic factors).
As a corollary of the finite-sample bound, we obtain that the DV R\'enyi representation
estimator is consistent. That is,
the estimation error for our DV R\'enyi representation converges to 0 when using neural networks to approximate the measure.

\begin{theorem}[Consistency and finite-sample confidence interval for DV R\'enyi estimators (see Theorem~\ref{thm:finite-sample-dv-renyi-app})]
\label{thm:finite-sample-dv-renyi}
Let $\alpha\in\mathbb{R}_{> 0}\setminus\{1\}$.  
Let $P,Q$ be probability distributions on a measurable space $\Omega$.
Let $\{X_i\}_{i=1}^n \sim Q$ and $\{Y_i\}_{i=1}^n \sim P$ be independent samples.

Let $\Theta \subset \mathbb{R}^d$ be a parameter set with $\|\theta\|\le K$.
Assume the critic family $\{T_\theta:\Omega\to\mathbb{R}\}_{\theta\in\Theta}$ satisfies:
(i) (\emph{Uniform boundedness}) $\sup_{\theta,z}|T_\theta(z)| \le M$.
(ii) (\emph{Lipschitz parameterization}) For all $z\in\Omega$,
\(
|T_\theta(z)-T_{\theta'}(z)| \le L \|\theta-\theta'\|.
\)
Define the population DV R\'enyi functional
\(
V(\theta)
:=\frac{1}{\alpha-1}\log \mathbb{E}_{Q}\!\left[e^{(\alpha-1)T_\theta(X)}\right]
-\frac{1}{\alpha}\log \mathbb{E}_{P}\!\left[e^{\alpha T_\theta(Y)}\right],
\)
and its empirical estimator
\(
\widehat V_n(\theta)
:=\frac{1}{\alpha-1}\log\!\left(\frac{1}{n}\sum_{i=1}^n e^{(\alpha-1)T_\theta(X_i)}\right)
-\frac{1}{\alpha}\log\!\left(\frac{1}{n}\sum_{i=1}^n e^{\alpha T_\theta(Y_i)}\right).
\)

Define
\(
R_\alpha^{\Theta}(Q\|P):=\sup_{\theta\in\Theta} V(\theta),
\qquad
\widehat R_{\alpha,n}^{\Theta}(Q\|P):=\sup_{\theta\in\Theta} \widehat V_n(\theta).
\)

Then for any $\delta\in(0,1)$, with probability at least $1-\delta$,
\(
\big|\widehat R_{\alpha,n}^{\Theta}(Q\|P)-R_\alpha^{\Theta}(Q\|P)\big|
\;\le\;
\varepsilon_n(\delta),
\)
where
\(
\varepsilon_n(\delta)
=
C_{\alpha,M}
\left(
\sqrt{\frac{d\log\!\left(\frac{K}{\eta}\right)+\log(1/\delta)}{n}}
+
\eta
\right),$
for any $\eta>0$, and
$C_{\alpha,M}
=
O\!\left(
e^{2|\alpha|M}\max\Big\{\tfrac{1}{|\alpha-1|},\tfrac{1}{|\alpha|}\Big\}
\right).
\)

In particular, choosing $\eta = O(n^{-1/2})$ yields a valid $(1-\delta)$
finite-sample confidence interval
\(
\Big[
\widehat R_{\alpha,n}^{\Theta}(Q\|P)-\varepsilon_n(\delta),
\;
\widehat R_{\alpha,n}^{\Theta}(Q\|P)+\varepsilon_n(\delta)
\Big].
\)
Moreover,
the estimator $\widehat R_{\alpha,n}^{\Theta}(Q\|P)$ is consistent in the sense:
$\big|\widehat R_{\alpha,n}^{\Theta}(Q\|P)-R_\alpha^{\Theta}(Q\|P)\big|\rightarrow 0$ as $n\rightarrow\infty$.
\end{theorem}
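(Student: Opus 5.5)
The argument reduces the two-sided bound to a uniform deviation bound, $|\widehat R^\Theta_{\alpha,n}-R^\Theta_\alpha|\le\sup_{\theta\in\Theta}|\widehat V_n(\theta)-V(\theta)|$, which holds because the supremum of functions is $1$-Lipschitz in the sup norm. It then controls this uniform deviation by a union bound over an $\eta$-net of $\Theta$, combined with a Lipschitz argument that transfers the bound from the net to all of $\Theta$. Write $A(\theta)=\mathbb{E}_Q[e^{(\alpha-1)T_\theta}]$ and $B(\theta)=\mathbb{E}_P[e^{\alpha T_\theta}]$, with empirical versions $\widehat A_n(\theta),\widehat B_n(\theta)$.

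\textbf{Pointwise concentration.} By uniform boundedness (i), the summands $e^{(\alpha-1)T_\theta(X_i)}$ lie in $[e^{-|\alpha-1|M},e^{|\alpha-1|M}]$, and likewise the summands $e^{\alpha T_\theta(Y_i)}$ lie in $[e^{-|\alpha|M},e^{|\alpha|M}]$. Hoeffding's inequality therefore gives, for each fixed $\theta$,
\[
|\widehat A_n(\theta)-A(\theta)|\le e^{|\alpha-1|M}\sqrt{2\log(4/\delta')/n},
\]
and the analogous bound for $B$, each with probability at least $1-\delta'/2$. Both $A$ and $\widehat A_n$ are bounded below by $e^{-|\alpha-1|M}$, so $\log$ is $e^{|\alpha-1|M}$-Lipschitz on the relevant range. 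Hence
\[
|\log\widehat A_n-\log A|\le e^{2|\alpha-1|M}\sqrt{2\log(4/\delta')/n},
\]
and similarly for $B$ with the constant $e^{2|\alpha|M}$. Dividing by $|\alpha-1|$ and $|\alpha|$ respectively gives $|\widehat V_n(\theta)-V(\theta)|\lesssim C_{\alpha,M}\sqrt{\log(1/\delta')/n}$. Since $|\alpha-1|\le|\alpha|+1$, the exponent $e^{2|\alpha-1|M}$ is absorbed into $e^{2|\alpha|M}$ up to a factor $e^{2M}$, which goes into the $O(\cdot)$.

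\textbf{Net and Lipschitz extension.} Take an $\eta'$-net $N$ of the ball $\{\|\theta\|\le K\}$ with $|N|\le(3K/\eta')^d$, and set $\delta'=\delta/|N|$. A union bound then gives the pointwise bound simultaneously on $N$ with a $\sqrt{(d\log(3K/\eta')+\log(1/\delta))/n}$ rate.

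For the extension, use the Lipschitz parameterization (ii). For any $t,s\in[-M,M]$ we have $|e^{ct}-e^{cs}|\le|c|e^{|c|M}|t-s|$, so both $A(\theta)$ and $\widehat A_n(\theta)$ are $|\alpha-1|e^{|\alpha-1|M}L$-Lipschitz in $\theta$. Composing with $\log$ and dividing by $|\alpha-1|$ shows that both $V$ and $\widehat V_n$ are $O(e^{2|\alpha|M}L)$-Lipschitz, deterministically. So for any $\theta$ with nearest net point $\theta'$,
\[
|\widehat V_n(\theta)-V(\theta)|\le|\widehat V_n(\theta')-V(\theta')|+O(e^{2|\alpha|M}L\eta').
\]
Choosing $\eta'=\eta/L$ (the factor $L$ is absorbed into constants, or equivalently into $\log(K/\eta)$) yields $\varepsilon_n(\delta)$ in the stated form.

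\textbf{Final statements.} With $\eta=n^{-1/2}$, the bound becomes $O(\sqrt{(d\log(Kn)+\log(1/\delta))/n})$, which gives the confidence interval. For consistency, this bound tends to $0$ for fixed $\delta$. Taking $\delta_n=n^{-2}$ and applying Borel--Cantelli even gives almost sure convergence, and in any case convergence in probability.

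\textbf{Main obstacle.} The delicate point is the bookkeeping of constants: making sure the factor from the $\log$-Lipschitz step ($e^{|c|M}$) and the factor from the range of the exponential ($e^{|c|M}$) combine to $e^{2|\alpha|M}$ and no worse, and handling $\alpha<1$, where $\alpha-1<0$. Neither causes a genuine difficulty, because only $|c|$ enters each bound.
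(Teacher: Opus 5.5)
Your proposal follows essentially the same route as the paper's proof---pointwise Hoeffding for the two empirical moments, the Lipschitz bound on $\log$ away from zero, a union bound over an $\eta$-net of size $(3K/\eta)^d$, deterministic Lipschitz extension of $V$ and $\widehat V_n$ off the net, and $|\sup f-\sup g|\le\sup|f-g|$---and it is more explicit than the paper about where $L$ and the $e^{2|\alpha-1|M}$ factor get absorbed into the constant. One small fix: take the net inside $\Theta$ (e.g.\ a maximal $\eta'$-separated subset of $\Theta$, which still has at most $(3K/\eta')^d$ points when $\eta'\le K$) rather than a net of the whole ball, since $T_\theta$ and the Lipschitz condition (ii) are only available for $\theta\in\Theta$.
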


Theorem~\ref{thm:finite-sample-dv-renyi} establishes that the class-restricted DV R\'enyi estimator converges at a rate
scaling as $O(d/\varepsilon^2)$ (up to logarithmic factors).
We now show that this dependence is unavoidable in general.
Specifically, we prove an information-theoretic lower bound demonstrating that,
under the same boundedness and Lipschitz assumptions on the critic class,
no estimator can uniformly estimate the population DV objective
\(\theta\mapsto V(\theta)\)
with accuracy
$\varepsilon$ and constant success probability unless the sample size scales as
$\Omega(d/\varepsilon^2)$.
This result matches Theorem~\ref{thm:finite-sample-dv-renyi}
in its leading dependence on the critic dimension and
accuracy, establishing minimax-optimality up to logarithmic factors.

\begin{theorem}[Lower bound matching Theorem~\ref{thm:finite-sample-dv-renyi} up to log factors (see Theorem~\ref{thm:dv-renyi-lower-bound-app})]
\label{thm:dv-renyi-lower-bound}
Fix \(\alpha\in\mathbb R_{>0}\setminus\{1\}\).
There exist constants
\(c,c_0,c_1>0\), depending only on \(\alpha\), such that the following holds.
For every sufficiently large \(d\), there exist a measurable space \(\Omega\), a
parameter set \(\Theta\subset\mathbb R^d\), a critic class
\(\{T_\theta:\Omega\to\mathbb R\}_{\theta\in\Theta}\), and a family of
distribution pairs
\[
    \mathcal P_d=\{(P,Q_u):u\in\mathcal U_d\}
\]
such that:
\[
    \|\theta\|_2\le 1,\quad
    |T_\theta(z)|\le 1,\quad
    |T_\theta(z)-T_{\theta'}(z)|\le c_0\|\theta-\theta'\|_2
\]
for all \(\theta,\theta'\in\Theta\) and all \(z\in\Omega\), and the following
minimax lower bound holds. Let
\[
    V_{P,Q}(\theta)
    =
    \frac{1}{\alpha-1}
    \log \mathbb E_Q\!\left[e^{(\alpha-1)T_\theta(X)}\right]
    -
    \frac{1}{\alpha}
    \log \mathbb E_P\!\left[e^{\alpha T_\theta(Y)}\right].
\]
For any estimator \(\widehat V_n:\Theta\to\mathbb R\) based on \(n\) i.i.d.
samples from \(Q\) and \(n\) i.i.d. samples from \(P\),
\[
    \inf_{\widehat V_n}
    \sup_{(P,Q)\in\mathcal P_d}
    \Pr_{P,Q}\!\left[
        \sup_{\theta\in\Theta}
        \left|\widehat V_n(\theta)-V_{P,Q}(\theta)\right|
        \ge \varepsilon
    \right]
    \ge \frac14
\]
whenever
\[
    n \le c\,\frac{d}{\varepsilon^2},
\]
for all \(0<\varepsilon\le c_1\). Consequently, any distribution-free
confidence band that controls
\(\sup_{\theta\in\Theta}|\widehat V_n(\theta)-V_{P,Q}(\theta)|\) must have
sample complexity at least \(\Omega(d/\varepsilon^2)\) in general.
\end{theorem}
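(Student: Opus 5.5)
The plan is a Fano-type reduction. I will build a large packing of hypotheses $(P,Q_u)$, $u\in\mathcal U_d$, with three properties: (a) the per-sample KL divergence between any two members is $O(\varepsilon^2)$; (b) $\log|\mathcal U_d|\gtrsim d$; (c) for $u\neq u'$ there is a $\theta\in\Theta$ with $|V_{P,Q_u}(\theta)-V_{P,Q_{u'}}(\theta)|\ge 3\varepsilon$. Given (c), any $\widehat V_n$ with $\sup_\theta|\widehat V_n(\theta)-V_{P,Q_u}(\theta)|<\varepsilon$ identifies $u$ via the minimum-distance rule $\hat u=\arg\min_{u}\sup_\theta|\widehat V_n-V_{P,Q_u}|$. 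Fano's inequality then forces error probability at least $1/4$ once $n\cdot\max_{u,u'}\mathrm{KL}(Q_u\|Q_{u'})\le c'\log|\mathcal U_d|$, which is exactly $n\le c\,d/\varepsilon^2$. Samples from $P$ carry no information because $P$ is the same for every hypothesis.

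\emph{Distributions.} Take $\Omega=\{\pm1\}^m$ with $m\asymp d$ and $P$ uniform. Let $\{\psi_u\}_{u\in\mathcal U_d}$ be distinct nonconstant Walsh characters $\psi_u(z)=\prod_{j\in S_u}z_j$, so they are $\pm1$-valued, mean zero under $P$, and pairwise orthogonal. Define $dQ_u=(1+\varepsilon'\psi_u)\,dP$ with $\varepsilon'=\kappa\varepsilon\le 1/2$. Then $\chi^2(Q_u\|Q_{u'})\le 2\varepsilon'^2\cdot 2$, hence $\mathrm{KL}=O(\varepsilon^2)$. To obtain $\log|\mathcal U_d|\gtrsim d$, I will index the $\psi_u$ by a set of size $2^{c d}$ (for instance, singletons and pairs of coordinates when $m\ge d$ suffice).

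\emph{Critics.} Fix a $1/2$-separated packing $\{v_u\}\subset\{\|\theta\|_2\le 1\}\subset\mathbb R^d$ of size $2^{cd}$, which exists by a volume argument, and set
\[
T_\theta(z)=\sum_u \max\{0,\,1-4\|\theta-v_u\|_2\}\,\psi_u(z).
\]
The bumps have disjoint supports, so $|T_\theta|\le 1$ and $T_\theta$ is $4$-Lipschitz in $\theta$ uniformly in $z$; this gives $c_0=4$. For the separation, plug in $\theta=v_u$, so $T_\theta=\psi_u$. Since $\psi_u^2=1$, we have $e^{\lambda\psi_u}=\cosh\lambda+\psi_u\sinh\lambda$, which gives
\[
\mathbb E_{Q_{u'}}\big[e^{(\alpha-1)\psi_u}\big]=\cosh(\alpha-1)+\varepsilon'\,\mathbf 1[u=u']\sinh(\alpha-1),
\]
while the $P$-term does not depend on $u'$. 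Therefore
\[
V_{P,Q_u}(v_u)-V_{P,Q_{u'}}(v_u)=\tfrac{1}{\alpha-1}\log\big(1+\varepsilon'\tanh(\alpha-1)\big)\ge c_\alpha\varepsilon',
\]
using $\varepsilon'\le 1/2$. Choosing $\kappa=3/c_\alpha$ gives the $3\varepsilon$ separation, and the restriction $\varepsilon\le c_1$ ensures $\varepsilon'\le 1/2$.

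\emph{Main obstacle.} The delicate point is the dimension bookkeeping. The earlier attempts fail here: naive linear critics $\langle\theta,z\rangle$ lose a $\sqrt d$ factor, because uniform accuracy over a unit ball of linear functionals reduces to an $\ell_2$ mean-estimation problem of rate $1/\sqrt n$. The packing-of-bumps construction avoids this by letting each of the exponentially many hypotheses have its own dedicated critic. This preserves constant Lipschitzness and boundedness while the hypothesis count grows like $e^{cd}$, and that growth is exactly what yields the $d/\varepsilon^2$ scaling through Fano. The remaining step is routine bookkeeping: check that the Fano constant gives $\ge 1/4$ for $d$ large, and that all constants depend only on $\alpha$.
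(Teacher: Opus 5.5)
Your argument is correct once one slip is repaired, and it takes a genuinely different route from the paper. The slip is the parenthetical ``singletons and pairs of coordinates'': that gives only $O(m^2)$ characters, so $\log|\mathcal U_d|=O(\log d)$, and Fano would yield only $n\gtrsim \log d/\varepsilon^2$. You need exponentially many characters. One option is to take all $2^d-1$ nonconstant characters $z\mapsto\prod_{j\in S}z_j$, $\emptyset\neq S\subseteq\{1,\dots,d\}$, with $m=d$, and pair $2^{cd}$ of them with points of your ball packing. Another is to use singletons with $m=2^{cd}$. With that fix, the following steps all go through with constants depending only on $\alpha$: the $\chi^2$ bound on KL, the identity $e^{\lambda\psi}=\cosh\lambda+\psi\sinh\lambda$ combined with orthogonality, the bump critics, and the minimum-distance decoder.

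\emph{How the paper's construction differs.} The paper works on the $d$-point space $\Omega=\{1,\dots,d\}$ with $Q_u(i)=(1+\delta u_i)/d$. Here $u$ ranges over a balanced Gilbert--Varshamov packing of the middle slice of $\{\pm1\}^d$ with $d/4\le d_H(u,v)\le 3d/4$. Its parameter set is the finite set $\{u/\sqrt d\}$ with $T_{\theta^u}(i)=\tau u_i$. Lipschitzness is then automatic, because distinct parameters are at distance at least $1$; a McShane extension is mentioned if one wants a ball. Under $Q_u$, the objective at $\theta^v$ is an increasing function of the correlation $\rho(u,v)$, which equals $1$ for $v=u$ and is at most $1/2$ otherwise. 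So the paper's separation is between critics for a fixed hypothesis, and its decoder is $\arg\max_v \widehat V_n(\theta^v)$. The KL bound is computed directly using balancedness.

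\emph{What each route buys.} Your construction makes three changes:
\begin{itemize}
\item exact orthogonality of characters replaces both the correlation bound and the balancedness requirement;
\item the hypothesis packing is decoupled from the parameter packing;
\item the critic class is bounded and $4$-Lipschitz on the whole unit ball, which is closer to the setting of Theorem~\ref{thm:finite-sample-dv-renyi}.
\end{itemize}
The price is an exponentially large sample space and parity-type hypotheses. The paper's hard instance, by contrast, is the familiar discrete-distribution estimation problem on $d$ atoms, with the hidden bit vector encoded directly in $\theta$.

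Finally, as in the paper, your family depends on $\varepsilon$ through $\varepsilon'$. To match the quantifier order of the statement, take $\mathcal P_d$ to be the union over $\varepsilon'\in(0,1/2]$. This costs nothing, since the critics do not depend on $\varepsilon'$.
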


Our theoretical results isolate the statistical estimation error of the class-restricted Donsker--Varadhan objective. Although Algorithm~\ref{alg:estimation} introduces optimization errors, our theorem represents an estimation-theoretic guarantee for the variational objective. Since optimization error can only reduce the value attained by the learned critic relative to the empirical supremum, the resulting lower bound becomes more conservative. The optimization error can be observed in our experiments, Section~\ref{sec:experiments}.

%% file: sections/experiments.tex


As discussed in Section~\ref{sec:rdp-definition}, in our reported audits, we convert the variational objective to the standard R\'enyi
divergence normalization. Thus, whenever Algorithm~\ref{alg:estimation} returns 
$R^\Theta_\alpha(Q\|P)$, the certified RDP lower bound is $\alpha R^\Theta_\alpha(Q\|P)$.
For simplicity, all experimental results report values in the standard RDP scale.

In this section we evaluate our auditing procedure using image datasets commonly used to benchmark differentially private machine learning models. (Additional
experimental results can be found in Section~\ref{sec:more-experiments}.)
We include results for audited privacy level $\widehat \varepsilon_{emp}$ from our neural estimation model
We collect 500 loss observations for each canary in $(O)$ and canary out $(O')$ at a privacy level for each experiment. To estimate the R\'enyi divergence which doubles as a $\widehat \varepsilon_{emp}$, we use the DV R\'enyi model.
We compare our black-box RDP results to \cite{muthu2024nearly}, performing the appropriate conversions between $\mu-$GDP \cite{dong2019gaussian}, $(\varepsilon, \delta)$-DP, and $(\alpha, \varepsilon_\alpha)$-RDP. Specifically, we start with $(\varepsilon, 10^{-5})$-DP, convert to $\mu-$GDP, and then to $(1.25, \varepsilon_\alpha)$ and $(2, \varepsilon_\alpha)$-RDP.

\subsection{Datasets and Models}
For our datasets ($\mathcal D$), we use the MNIST \cite{lecun1998mnist} and CIFAR-10 \cite{krizhevsky2009learning} datasets. MNIST is composed of $28 \times 28$ grayscale images with 60,000 training samples and 10,000 testing samples. There are 10 separate classes for digits 0--9. CIFAR-10 has 50,000 training and 10,000 testing samples on $32\times 32$ RGB images and also consists of 10 classes. 
The model that we are most interested in auditing is the convolutional neural networks (CNNs). Batch sizes are set to $n$ to ease auditing. We assume adversaries have the option to choose the canary. Previous works have experimented with both blank samples \cite{nasr2023tight} and ClipBKD \cite{jagielski2020auditing}. The blank canary just has all values of 0, and the ClipBKD canary is calculated by taking the training set, performing a principal component analysis, and using the last principal component. For our experiments we use blank canaries for our CNN models.

\subsection{Crafting Worst-Case Initial Parameters}
DP-SGD's privacy holds for both randomly initialized models as well as models with fixed parameters \cite{nasr2023tight, jagielski2020auditing}. We follow the method in \cite{muthu2024nearly} of crafting worst-case initial parameters. We pre-train the CNN models when analyzing the MNIST dataset on half of the full dataset for 5 epochs with a batch size of 32 and learning rate of $1e^{-2}$. For analyzing the CNN on CIFAR-10, we pre-train the model on the CIFAR-100 dataset for 300 epochs with batch size 128 and a learning rate of 0.1. Then, we (non-privately) fine-tune the model on half of the full dataset for 100 epochs, with a batch size of 256 and a learning rate of 0.1.

\subsection{Results}
When privately training the model we set the batch size to the dataset size and the clipping threshold to $C =1.0$. We privately train the model for 100 epochs on the other half of the dataset not used for pretraining. We privately train our models calibrating the noise $\sigma$ to satisfy $\mu-$GDP with $\mu = [0.5, 1, \sqrt 2, 2, \sqrt {10}]$, which can then be converted to $(2, \varepsilon_\alpha)$-RDP with $\varepsilon_\alpha = [0.25, 1.0, 2.0, 4.0, 10.0]$ and $(1.25, \varepsilon_\alpha)$-RDP with $\varepsilon_\alpha = [0.15625, 0.625, 1.25, 2.5, 6.25]$. Note that these conversions also satisfy $(\varepsilon, 10e^{-5})$-DP, where $\varepsilon$ is $[2.0, 4.38, 6.57, 10.0, 17.85]$. For all of our results, we include the results from the auditing method of \cite{muthu2024nearly} (yellow on graphs), which is the current state of the art for black-box auditing. We showcase experimental efficiency here with additional experiments in the appendix. 



\begin{table*}
\centering
\caption{Empirical RDP audits at $\alpha=1.25$.}
\label{tab:alpha125-1.0main}
{%
\begin{tabular}{llcc}
\toprule
Dataset & Target $\varepsilon_\alpha$ & SOTA black-box  & DV-R\'enyi \\
\midrule
\multirow{5}{*}{MNIST} & 0.15625 & $0.056 \pm 0.025$ & $\mathbf{0.115} \pm 0.080$ \\
 & 0.625 & $0.272 \pm 0.020$ & $\mathbf{0.469} \pm 0.197$ \\
 & 1.25 & $0.588 \pm 0.016$ & $\mathbf{0.995} \pm 0.289$ \\
 & 2.5 & $1.256 \pm 0.091$ & $\mathbf{2.239} \pm 0.473$ \\
 & 6.25 & $3.325 \pm 0.218$ & $\mathbf{3.884} \pm 0.327$ \\
\midrule
\multirow{5}{*}{CIFAR-10} & 0.15625 & $0.031 \pm 0.004$ & $\mathbf{0.118} \pm 0.025$ \\
 & 0.625 & $0.289 \pm 0.050$ & $\mathbf{0.551} \pm 0.056$ \\
 & 1.25 & $0.693 \pm 0.082$ & $\mathbf{1.095} \pm 0.120$ \\
 & 2.5 & $1.539 \pm 0.234$ & $\mathbf{2.248} \pm 0.344$ \\
 & 6.25 & $4.128 \pm 0.336$ & $\mathbf{4.577} \pm 0.430$ \\
\bottomrule
\end{tabular}%
}
\end{table*}

\begin{table*}
\centering
\caption{Empirical RDP lower-bound audits at $\alpha=2.0$.}
\label{tab:alpha2-1.0main}
{%
\begin{tabular}{llcc}
\toprule
Dataset & Target $\varepsilon_\alpha$ & SOTA black-box  & DV-R\'enyi \\
\midrule
\multirow{5}{*}{MNIST} 
 & 0.25 & $0.089 \pm 0.040$ & $\mathbf{0.202} \pm 0.142$ \\
 & 1.0  & $0.436 \pm 0.032$ & $\mathbf{0.867} \pm 0.339$ \\
 & 2.0  & $0.940 \pm 0.025$ & $\mathbf{1.527} \pm 0.325$ \\
 & 4.0  & $2.010 \pm 0.145$ & $\mathbf{2.127} \pm 0.250$ \\
 & 10.0 & $\mathbf{5.320} \pm 0.349$ & $2.650 \pm 0.159$ \\
\midrule
\multirow{5}{*}{CIFAR-10} 
 & 0.25 & $0.050 \pm 0.007$ & $\mathbf{0.206} \pm 0.041$ \\
 & 1.0  & $0.463 \pm 0.080$ & $\mathbf{0.953} \pm 0.213$ \\
 & 2.0  & $1.109 \pm 0.131$ & $\mathbf{1.624} \pm 0.178$ \\
 & 4.0  & $\mathbf{2.463} \pm 0.374$ & $2.250 \pm 0.200$ \\
 & 10.0 & $\mathbf{6.605} \pm 0.538$ & $2.751 \pm 0.170$ \\
\bottomrule
\end{tabular}%
}
\end{table*}

The results for $\alpha = 1.25$ are shown in Table~\ref{tab:alpha125-1.0main}, and the results for  $\alpha = 2.0$ are shown in Table~\ref{tab:alpha2-1.0main}. We show an overall improvement on the current state of the art.

\subsection{Discussion}







Our experiments show that the proposed auditing framework yields an overall improvement on black-box RDP audits than prior state-of-the-art methods across datasets, models, and privacy regimes. In particular, our DV-based estimator
recovers larger empirical R\'enyi divergence values, with the most pronounced gains at low and moderate privacy parameters, where accurate auditing is typically most difficult.

These improvements stem from directly auditing \emph{R\'enyi divergence itself}, rather than relying on indirect privacy conversions or attack-specific heuristics. By framing RDP auditing as a statistical estimation problem with explicit confidence guarantees, our approach cleanly separates estimation error from true privacy leakage, avoiding premature saturation observed in earlier methods. The use of worst-case initialization further increases statistical power by amplifying the canary's influence on training dynamics, while preserving the validity of DP-SGD's privacy guarantees.
The observed experimental trends align closely with our theoretical analysis. Smaller datasets increase the relative impact of the canary and lead to tighter audits, while larger clipping norms inject more noise and reduce detectability; both effects predicted by our finite-sample bounds. 
We note specifically that the DV-based estimator is flexible
and provides sample-valid confidence intervals, ensuring statistical soundness.
The strong empirical performance of our method is consistent with the minimax lower bounds proved in Section~\ref{sec:theory}, suggesting that further improvements are fundamentally limited by information-theoretic constraints.

%% file: sections/conclusion.tex

This work establishes a principled foundation for auditing machine learning systems that claim R\'enyi differential privacy. By formulating RDP auditing as a statistical estimation problem, we derived explicit, non-asymptotic confidence guarantees for black-box audits based on variational
R\'enyi divergence estimators. Our analysis provides both finite-sample lower confidence bounds and matching minimax lower bounds, thereby characterizing the
fundamental statistical limits of estimating the class-restricted DV R\'enyi auditing objective.
Beyond theory, our empirical results demonstrate that the proposed methods yield substantial overall improvements: our black-box audits of DP-SGD outperform previous work, especially
at small and moderate R\'enyi orders, 
across a range of datasets and model architectures. 
Several directions remain open:
An important avenue for future work is extending optimal R\'enyi auditing guarantees to interactive and distributed settings.
Another is exploring alternative variational formulas~\cite{birrell2023functionspace} that exhibit lower
variance for larger values of $\alpha > 1$.

%% file: sections/appendix.tex
\section{Notions of DP and their conversions}
In our experiments, we convert our results from traditional $(\varepsilon, \delta)$-DP to Gaussian DP ($\mu-$GDP) \cite{dong2019gaussian} to $(\alpha, \varepsilon_\alpha)$-RDP. We convert with Corollary 2.13 and B6 in \cite{dong2019gaussian}.

\begin{corollary}
    (2.13): A mechanism is $\mu$-GDP if and only if it is $(\varepsilon,\delta(\varepsilon))$-DP for all $\varepsilon \ge 0$, where:
\begin{equation}
\delta(\varepsilon)
= \Phi\!\left(-\frac{\varepsilon}{\mu} + \frac{\mu}{2}\right)
- e^{\varepsilon}\,\Phi\!\left(-\frac{\varepsilon}{\mu} - \frac{\mu}{2}\right).
\end{equation}
\end{corollary}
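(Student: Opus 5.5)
The plan is to work in the trade-off-function (hypothesis testing) language of \cite{dong2019gaussian}. Write $T(P,Q)(a)=\inf\{\beta_\phi : \alpha_\phi\le a\}$ for the type-II error as a function of the type-I error. Let $G_\mu:=T(\mathcal N(0,1),\mathcal N(\mu,1))$, which is $G_\mu(a)=\Phi(\Phi^{-1}(1-a)-\mu)$. Recall two facts.
\begin{itemize}
\item A mechanism is $\mu$-GDP iff $T(M(D),M(D'))\ge G_\mu$ for all neighbors.
\item It is $(\varepsilon,\delta)$-DP iff $T(M(D),M(D'))\ge f_{\varepsilon,\delta}$, where $f_{\varepsilon,\delta}(a)=\max\{0,\,1-\delta-e^{\varepsilon}a,\,e^{-\varepsilon}(1-\delta-a)\}$.
\end{itemize}
The corollary then reduces to two claims. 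First, $G_\mu=\sup_{\varepsilon\ge 0} f_{\varepsilon,\delta(\varepsilon)}$ pointwise, with $\delta(\varepsilon)$ as displayed. Second, a pointwise inequality against a supremum is equivalent to the family of pointwise inequalities.

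\textbf{Step 1 (envelope identity).} $G_\mu$ is convex, continuous, decreasing, and symmetric, in the sense that it equals its own inverse. For each slope $-e^{\varepsilon}$, take the tangent line $a\mapsto 1-\delta-e^{\varepsilon}a$ supporting $G_\mu$ from below. Its intercept gives
\[
\delta(\varepsilon)=1+G_\mu^*(-e^{\varepsilon})=\sup_a\{1-e^{\varepsilon}a-G_\mu(a)\}.
\]
By symmetry, the same $\delta(\varepsilon)$ makes the reflected line $e^{-\varepsilon}(1-\delta-a)$ a supporting line, namely the one of slope $-e^{-\varepsilon}$. Hence $f_{\varepsilon,\delta(\varepsilon)}\le G_\mu$ for each $\varepsilon\ge 0$.

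Conversely, Fenchel--Moreau biconjugation shows that $G_\mu$ is the supremum of all its supporting lines. Lines of slope in $[-1,0]$ arise from the second branch at the mirrored $\varepsilon\ge 0$, so $\varepsilon\ge 0$ suffices. Together with the trivial line $0$, this gives $G_\mu=\sup_{\varepsilon\ge0}f_{\varepsilon,\delta(\varepsilon)}$.

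\textbf{Step 2 (both directions of the equivalence).} If $M$ is $\mu$-GDP, then $T\ge G_\mu\ge f_{\varepsilon,\delta(\varepsilon)}$, so $M$ is $(\varepsilon,\delta(\varepsilon))$-DP for every $\varepsilon\ge0$. If instead $M$ is $(\varepsilon,\delta(\varepsilon))$-DP for every $\varepsilon\ge0$, then $T\ge\sup_\varepsilon f_{\varepsilon,\delta(\varepsilon)}=G_\mu$, so $M$ is $\mu$-GDP.

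\textbf{Step 3 (closed form).} To evaluate $\delta(\varepsilon)$, use the Neyman--Pearson characterization with $P=\mathcal N(0,1)$ and $Q=\mathcal N(\mu,1)$:
\[
\delta(\varepsilon)=\sup_S\{Q(S)-e^{\varepsilon}P(S)\}.
\]
The supremum is attained at the likelihood-ratio set $S=\{y: e^{\mu y-\mu^2/2}>e^{\varepsilon}\}=\{y>\varepsilon/\mu+\mu/2\}$. This gives $Q(S)=\Phi(-\varepsilon/\mu+\mu/2)$ and $P(S)=\Phi(-\varepsilon/\mu-\mu/2)$, which is exactly the stated formula.

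The main obstacle is Step 1. It requires checking that the supremum over only $\varepsilon\ge0$, using both branches of $f_{\varepsilon,\delta}$, recovers every supporting line of $G_\mu$, in particular near the endpoints $a\in\{0,1\}$. I would handle this by splitting slopes into $(-\infty,-1]$ and $[-1,0)$, then using the symmetry $G_\mu^{-1}=G_\mu$ to map the second range onto the first.
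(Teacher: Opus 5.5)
Your argument is correct and is essentially the standard proof: the paper does not prove this statement itself but imports it from \cite{dong2019gaussian}, where it is obtained in the same way, by writing a symmetric trade-off function as $\sup_{\varepsilon\ge 0} f_{\varepsilon,\delta(\varepsilon)}$ with $\delta(\varepsilon)=1+f^*(-e^{\varepsilon})$ and then evaluating this for $f=G_\mu$. Your Neyman--Pearson evaluation in Step 3 is a clean way to do that last computation: the likelihood-ratio threshold $\varepsilon/\mu+\mu/2$ is exactly the tangency point where $G_\mu'(a)=-e^{\varepsilon}$.
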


\begin{corollary}
    (B6): If a mechanism is $\mu$-GDP, then it is ($\alpha, \frac{1}{2}\mu^2\alpha $)-RDP for any $\alpha >1$.
\end{corollary}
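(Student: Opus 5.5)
The plan is to reduce the statement to the Gaussian pair by a data-processing argument. We then compute the R\'enyi divergence between two unit-variance Gaussians explicitly.

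Recall that $\mu$-GDP means that for every pair of adjacent datasets $D,D'$, the trade-off function $T(M(D),M(D'))$ dominates $G_\mu = T(\mathcal N(0,1),\mathcal N(\mu,1))$ pointwise. Equivalently, the pair $(M(D),M(D'))$ is harder to distinguish than $(\mathcal N(0,1),\mathcal N(\mu,1))$.

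\textbf{Step 1 (Blackwell reduction).} I would invoke the Blackwell-type characterization used in \cite{dong2019gaussian}: if $T(P,Q)\ge T(P',Q')$, then there is a Markov kernel $K$ with $K P' = P$ and $K Q' = Q$. Applying this with $P'=\mathcal N(0,1)$ and $Q'=\mathcal N(\mu,1)$ gives a randomized post-processing that carries the Gaussian pair onto $(M(D),M(D'))$.

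\textbf{Step 2 (data processing).} Since $D_\alpha$ is monotone under Markov kernels for every $\alpha>1$, we get
\[
D_\alpha\big(M(D)\,\|\,M(D')\big)\le D_\alpha\big(\mathcal N(0,1)\,\|\,\mathcal N(\mu,1)\big).
\]
The same bound holds with $D$ and $D'$ swapped. This is because $G_\mu$ is symmetric, and the Gaussian pair is symmetric under the reflection $x\mapsto \mu-x$. So the order of the arguments is irrelevant, which is what the definition of RDP requires.

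\textbf{Step 3 (Gaussian computation).} Using Definition~\ref{def:rd}, I would complete the square:
\[
\mathbb E_{\mathcal N(\mu,1)}\!\left[\left(\tfrac{p}{q}\right)^{\alpha}\right]
=\int \tfrac{1}{\sqrt{2\pi}}\,e^{-\frac{\alpha x^2+(1-\alpha)(x-\mu)^2}{2}}\,dx
=e^{\alpha(\alpha-1)\mu^2/2}.
\]
Taking $\frac{1}{\alpha-1}\log$ of this gives $D_\alpha=\alpha\mu^2/2$, which is the claimed $(\alpha,\tfrac12\mu^2\alpha)$-RDP bound.

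\textbf{Main obstacle.} The delicate step is Step 1, namely the existence of the kernel $K$ for general (possibly non-dominated or atomic) output spaces. If I prefer to avoid Blackwell's theorem, the fallback is to write $D_\alpha$ as a functional of the trade-off curve alone. Concretely, $e^{(\alpha-1)D_\alpha}$ is an $f$-divergence with convex generator $t\mapsto t^\alpha$, and any $f$-divergence can be expressed through the likelihood-ratio test curve, i.e.\ through the trade-off function. One would then show that this functional is antitone in the trade-off function, so that pointwise domination by $G_\mu$ gives the same inequality as in Step 2. Either route reduces everything to the explicit computation in Step 3, which is routine.
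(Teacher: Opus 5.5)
Your proposal is correct and matches the argument behind this statement. The paper gives no proof of its own; it imports the result from \cite{dong2019gaussian}, where it is obtained by the same reduction you describe: Blackwell's theorem turns $T(M(D),M(D'))\ge G_\mu$ into a post-processing of the Gaussian pair, data processing for $D_\alpha$ transfers the bound, and $D_\alpha(\mathcal N(0,1)\,\|\,\mathcal N(\mu,1))=\alpha\mu^2/2$ finishes it.

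The measurability issue you flag is harmless. Since $\mu$-GDP forces mutual absolute continuity, you may pass to the likelihood-ratio statistic. This leaves both the trade-off function and $D_\alpha$ unchanged and places the target pair on a standard Borel space, where the kernel exists.
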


\section{Additional Theoretical Results and Omitted Proofs (from Main Body)}


This appendix provides supporting theoretical results that complement and strengthen the main guarantees in the main body.
While the main body of the paper focuses on just stating lower confidence bounds and minimax-optimal estimation of R\'enyi divergence via
variational (DV-based) methods, we provide proofs we could not include
in the main body (due to space restrictions).

In this appendix, our results clarify three important aspects:

\begin{enumerate}
    \item \textbf{What can and cannot be estimated from finite samples.}
    We show that lower confidence bounds on R\'enyi divergence are available under no assumptions, whereas
    meaningful upper confidence bounds are impossible without additional structural constraints on the privacy-loss random variable.
    This delineates a sharp boundary between what empirical privacy auditing can guarantee in a distribution-free manner and
    what requires stronger modeling assumptions.

    \item \textbf{How bounded privacy loss enables two-sided confidence intervals.}
    By imposing a bounded privacy-loss assumption (exact for pure DP and a reasonable high-probability surrogate for many RDP mechanisms)
    we derive finite-sample upper and lower confidence bounds using classical concentration inequalities.
    These results justify when empirical audits can certify both privacy violations and near-tight compliance.

    \item \textbf{Why the DV-based estimator is statistically sound for auditing.}
    We provide a full non-asymptotic analysis of the DV variational estimator over restricted critic classes,
    showing uniform convergence, explicit rates, and consistency.
    This formally validates the use of neural divergence estimators as statistically principled auditing tools,
    rather than heuristic approximations.
\end{enumerate}

Our results complete the theoretical foundation of our auditing framework by characterizing both its guarantees
and its fundamental limitations.

\subsection{R\'enyi-DP Upper Bound Analysis}
\label{sec:rdp-analysis-ub}

We adopt a natural bounded
privacy-loss assumption, which holds exactly for $(\varepsilon,0)$-DP
mechanisms and serves as an effective high-probability surrogate
for many RDP mechanisms in practice.

\paragraph{Upper confidence bound under bounded privacy loss.}
In contrast to the lower bound, an upper bound on
$D_\alpha(P\|Q)$ from samples cannot be obtained without further
assumptions: there exist distributions with arbitrarily large means
that nevertheless produce finitely many small observations with high
probability.  We therefore impose the following assumption,
natural for many DP mechanisms:

\begin{assumption}[Bounded privacy loss]
\label{assump:bounded-L}
There exists $B \in \mathbb{R}$ such that 
$L(x) \le B$ for all $x$ in the support of $Q$.
Equivalently,
$\frac{p(x)}{q(x)} \le e^B$ for all $x$.
Under this assumption,
\[
0 \;\le\; Z_i = e^{\alpha L(X_i)} \;\le\; e^{\alpha B} \;=:\; M
\qquad\text{for all $i$.}
\]
\end{assumption}

This is exact for $(\varepsilon,0)$-DP mechanisms, where $B=\varepsilon$,
and serves as a practical high-probability truncation model for
Gaussian or other mechanisms.

\begin{theorem}[Upper bound for R\'enyi-DP estimate under bounded privacy loss]
\label{thm:upper}
Suppose Assumption~\ref{assump:bounded-L} holds.  Then for any
$\beta \in (0,1)$, with probability at least $1-\beta$,
\[
D_\alpha(P\|Q)
\;\le\;
\frac{1}{\alpha - 1}
\log\left(
\widehat{Z}
\;+\;
e^{\alpha B}
\sqrt{\frac{\log(2/\beta)}{2n}}
\right).
\]
Equivalently, defining
\[
\widehat{\varepsilon}_{\alpha}^{\mathrm{upper}}
:=
\frac{1}{\alpha - 1}
\log\left(
\widehat{Z}
\;+\;
e^{\alpha B}
\sqrt{\tfrac{\log(2/\beta)}{2n}}
\right),
\]
we have
$\Pr\!\left[\,D_\alpha(P\|Q)
\le \widehat{\varepsilon}_{\alpha}^{\mathrm{upper}}\,\right]
\ge 1-\beta$.
\end{theorem}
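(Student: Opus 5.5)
The plan is a one-sided Hoeffding bound on $\widehat Z$, followed by monotonicity of the logarithm.

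\textbf{Step 1 (boundedness).} Under Assumption~\ref{assump:bounded-L}, the variables $Z_1,\dots,Z_n$ are i.i.d.\ and satisfy $Z_i\in[0,e^{\alpha B}]$. Their common mean is $\mu=\mathbb{E}_Q[Z_i]=\exp\!\big((\alpha-1)D_\alpha(P\|Q)\big)$, which is the identity used in the proof of Theorem~\ref{thm:lower}.

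\textbf{Step 2 (concentration).} Hoeffding's inequality for variables with range $e^{\alpha B}$ gives, for every $t>0$,
\[
\Pr\!\big[\mu-\widehat Z\ge t\big]\le \exp\!\big(-2nt^2/e^{2\alpha B}\big).
\]
Choose $t=e^{\alpha B}\sqrt{\log(2/\beta)/(2n)}$. The right-hand side is then $\beta/2\le\beta$. The two-sided version also fails with probability at most $\beta$, which explains the $\log(2/\beta)$ in the statement. Hence, with probability at least $1-\beta$,
\[
\mu\le \widehat Z+e^{\alpha B}\sqrt{\tfrac{\log(2/\beta)}{2n}}.
\]

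\textbf{Step 3 (invert).} For $\alpha>1$ the map $u\mapsto \frac{1}{\alpha-1}\log u$ is increasing on $(0,\infty)$. The right-hand side of the bound in Step 2 is strictly positive, since $\widehat Z\ge 0$ and the added term is positive. Applying the map to both sides therefore preserves the inequality and yields
\[
D_\alpha(P\|Q)=\tfrac{1}{\alpha-1}\log\mu\le \widehat{\varepsilon}_{\alpha}^{\mathrm{upper}}
\]
on the same event. The equivalent probabilistic formulation in the statement is this same inequality rewritten.

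\textbf{Main obstacle.} The argument is routine; the only point needing care is the setup. The $Z_i$ are defined through $L(X_i)$ with $X_i\sim Q$, so the auditor must be able to evaluate the likelihood ratio, or else $\widehat Z$ is the estimator that the theorem concerns. One must also confirm that boundedness from above, $Z_i\le e^{\alpha B}$, together with nonnegativity, is all that Hoeffding requires. Both conditions are supplied by Assumption~\ref{assump:bounded-L}.
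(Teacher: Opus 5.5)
Your proof is correct and matches the paper's argument. Both apply Hoeffding's inequality to the $[0,e^{\alpha B}]$-bounded $Z_i$ with $t=e^{\alpha B}\sqrt{\log(2/\beta)/(2n)}$, then take logarithms and divide by $\alpha-1$. The only difference is that you use the one-sided tail, which fails with probability $\beta/2$, where the paper uses the two-sided bound, which fails with probability $\beta$. Your version therefore gives a slightly stronger guarantee than stated, and your positivity check on the argument of the logarithm is a detail the paper omits.
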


\begin{proof}
Under Assumption~\ref{assump:bounded-L},
$Z_i \in [0,M]$ almost surely.  Hoeffding's inequality yields
\[
\Pr\!\left[
\bigl|\widehat{Z} - \mu \bigr|
\ge t
\right]
\;\le\;
2\exp\left( - \frac{2n t^2}{M^2} \right)
\qquad\text{for all $t>0$}.
\]
Setting 
$t = M \sqrt{\log(2/\beta)/(2n)}$ gives a deviation probability at most
$\beta$, hence with probability at least $1-\beta$,
\[
\mu \;\le\; \widehat{Z} + 
M \sqrt{\tfrac{\log(2/\beta)}{2n}}
=
\widehat{Z} + e^{\alpha B}
\sqrt{\tfrac{\log(2/\beta)}{2n}}.
\]
Since
$\mu=\exp\!\big((\alpha-1)D_\alpha(P\|Q)\big)$,
taking logarithms and dividing by $(\alpha-1)$ gives the stated
upper bound.
\end{proof}

Unlike lower confidence bounds, which follow directly from Markov-type arguments and hold for arbitrary nonnegative random variables,
upper confidence bounds on R\'enyi divergence cannot be obtained in a fully distribution-free manner.
So, a random variable may have arbitrarily large expectation while producing small empirical averages with high probability,
making any finite-sample upper bound vacuous without further structure.

For privacy auditing, this means that while violations of claimed RDP guarantees can always be certified from data,
certifying near-tight compliance necessarily requires assumptions about the privacy-loss distribution.
The bounded privacy-loss assumption introduced below captures exactly the setting of pure differential privacy
and serves as a realistic approximation for many RDP mechanisms used in practice.

\subsection{Tighter lower bound with Hoeffding's Inequality}
We can prove a tighter lower bound for the R\'enyi divergence using similar assumptions to Theorem~\ref{thm:upper}.

\begin{theorem}[Lower confidence bound under bounded privacy loss (Hoeffding)]
\label{thm:hoeffding-lower}
Fix $\alpha>1$. Assume Assumption~\ref{assump:bounded-L} holds. Then for any $\beta\in(0,1)$, with probability at least $1-\beta$,
\[
D_\alpha(P\|Q)\ \ge\ \frac{1}{\alpha-1}\log\left(\max\left\{1,\ \widehat Z - M\sqrt{\frac{\log(1/\beta)}{2n}}\right\}\right).
\]
\end{theorem}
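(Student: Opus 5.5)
The plan is to mirror the proof of Theorem~\ref{thm:upper}, this time using the lower tail of $\widehat Z$ together with the trivial lower bound $D_\alpha(P\|Q)\ge 0$. By Assumption~\ref{assump:bounded-L}, the variables $Z_1,\dots,Z_n$ are i.i.d.\ under $Q$ and take values in $[0,M]$ with $M=e^{\alpha B}$. Their common mean is $\mu=\mathbb{E}_Q[Z_i]=\exp\big((\alpha-1)D_\alpha(P\|Q)\big)$, as recalled in Section~\ref{sec:rdp-analysis-lb}.

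\textbf{Step 1 (concentration).} Apply the one-sided Hoeffding inequality
\[
\Pr\big[\widehat Z-\mu\ge t\big]\le \exp\!\left(-\frac{2nt^2}{M^2}\right).
\]
Choose $t=M\sqrt{\log(1/\beta)/(2n)}$. Then, with probability at least $1-\beta$,
\[
\mu\ \ge\ \widehat Z - M\sqrt{\frac{\log(1/\beta)}{2n}}.
\]
Only one tail is needed here, so the union-bound factor $2$ from Theorem~\ref{thm:upper} is dropped, which accounts for the $\log(1/\beta)$ rather than $\log(2/\beta)$.

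\textbf{Step 2 (incorporate $\mu\ge 1$).} Because $\alpha>1$, we have $D_\alpha(P\|Q)\ge 0$, and hence $\mu\ge 1$. This holds deterministically. It can also be checked directly: $\mu=\mathbb{E}_Q[(p/q)^\alpha]\ge(\mathbb{E}_Q[p/q])^\alpha=1$ by Jensen's inequality. (If the indexing of $Z_i=e^{\alpha L}$ versus $\mu$ needs reconciling, I will use the paper's stated identity for $\mu$ as given.) Combining this with Step 1, on the same event of probability at least $1-\beta$,
\[
\mu\ \ge\ \max\Big\{1,\ \widehat Z - M\sqrt{\tfrac{\log(1/\beta)}{2n}}\Big\}.
\]

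\textbf{Step 3 (invert).} The map $x\mapsto \frac{1}{\alpha-1}\log x$ is increasing on $(0,\infty)$ since $\alpha>1$. The max is at least $1$, so the logarithm is well defined. Applying the map to both sides and using $D_\alpha(P\|Q)=\frac{1}{\alpha-1}\log\mu$ gives the claimed bound.

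The only delicate point is Step 2: the Hoeffding lower bound on $\mu$ may be nonpositive, in which case its logarithm is undefined. The $\max\{1,\cdot\}$ in the statement is exactly what resolves this, and it is justified by the nonnegativity of Rényi divergence. Everything else is routine.
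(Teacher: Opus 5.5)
Your proposal is correct and follows the paper's proof essentially step for step: a one-sided Hoeffding bound with $t=M\sqrt{\log(1/\beta)/(2n)}$, then the deterministic fact $\mu\ge 1$, then the increasing map $x\mapsto\frac{1}{\alpha-1}\log x$. One small point in your favor: you invoke the upper tail $\Pr[\widehat Z-\mu\ge t]\le e^{-2nt^2/M^2}$, whose complement does give $\mu>\widehat Z-t$. The paper's write-up instead cites the lower tail $\Pr[\widehat Z-\mu\le -t]$, whose complement only yields the upper bound $\mu<\widehat Z+t$; its stated conclusion is still right only because both tails have the same Hoeffding bound.
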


\begin{proof}
Under the bounded privacy-loss assumption, we have $0\le Z_i\le M$ almost surely.
Hence $\widehat Z = \frac1n\sum_{i=1}^n Z_i$ is an average of i.i.d.\ bounded random variables.
By Hoeffding's inequality (one-sided lower tail), for all $t>0$,
\begin{equation}
\label{eq:hoeffding-one-sided}
\Pr\left(\widehat Z - \mu \le -t\right)\ \le\ \exp\left(-\frac{2nt^2}{M^2}\right).
\end{equation}
Set
\[
t := M\sqrt{\frac{\log(1/\beta)}{2n}}.
\]
Then $\exp\left(-\frac{2nt^2}{M^2}\right)=\beta$, and \eqref{eq:hoeffding-one-sided} becomes
\[
\Pr\left(\widehat Z - \mu \le -t\right)\le \beta,
\]
which is equivalent to: with probability at least $1-\beta$,
\begin{equation}
\label{eq:mu-lb}
\mu \ \ge\ \widehat Z - t.
\end{equation}
Also, for $\alpha>1$, R\'enyi divergence is nonnegative, hence $\mu\ge 1$.
Combining this deterministic fact with \eqref{eq:mu-lb}, we obtain on the same event
\[
\mu \ \ge\ \max\{1,\widehat Z - t\}.
\]
We know that $D_\alpha(P\|Q)=\frac1{\alpha-1}\log \mu$, which yields
\[
D_\alpha(P\|Q)
= \frac{1}{\alpha-1}\log \mu
\ \ge\
\frac{1}{\alpha-1}\log\!\left(\max\{1,\widehat Z - t\}\right).
\]
Substituting $t = M\sqrt{\frac{\log(1/\beta)}{2n}}$ and $M=e^{\alpha B}$ completes the proof.
\end{proof}

\subsection{Finite-sample Confidence Intervals for DV R\'enyi estimators}

\paragraph{Why Uniform Convergence of the DV Estimator Matters}
The DV variational representation is central to our auditing methodology,
as it allows R\'enyi divergence to be estimated via optimization over a parameterized function class.
However, for auditing purposes, pointwise convergence for a fixed critic is insufficient:
the estimator must converge \emph{uniformly} over the entire class to justify taking a supremum.

Theorem~\ref{thm:finite-sample-dv-renyi-app} establishes this uniform convergence under mild boundedness and Lipschitz assumptions,
yielding explicit finite-sample confidence intervals and consistency guarantees.
This result bridges modern neural divergence estimation techniques with classical statistical learning theory,
and ensures that the resulting privacy audits admit rigorous, sample-valid interpretation.

\begin{theorem}[Consistency and finite-sample confidence interval for DV R\'enyi estimators]
\label{thm:finite-sample-dv-renyi-app}
Let $\alpha\in\mathbb{R}_{> 0}\setminus\{1\}$.  
Let $P,Q$ be probability distributions on a measurable space $\Omega$.
Let $\{X_i\}_{i=1}^n \sim Q$ and $\{Y_i\}_{i=1}^n \sim P$ be independent samples.

Let $\Theta \subset \mathbb{R}^d$ be a parameter set with $\|\theta\|\le K$.
Assume the critic family $\{T_\theta:\Omega\to\mathbb{R}\}_{\theta\in\Theta}$ satisfies:
\begin{enumerate}
\item (\textbf{Uniform boundedness}) $\sup_{\theta,z}|T_\theta(z)| \le M$.
\item (\textbf{Lipschitz parameterization}) For all $z\in\Omega$,
\[
|T_\theta(z)-T_{\theta'}(z)| \le L \|\theta-\theta'\| .
\]
\end{enumerate}

Define the population DV R\'enyi functional
\[
V(\theta)
:=\frac{1}{\alpha-1}\log \mathbb{E}_{Q}\!\left[e^{(\alpha-1)T_\theta(X)}\right]
-\frac{1}{\alpha}\log \mathbb{E}_{P}\!\left[e^{\alpha T_\theta(Y)}\right],
\]
and its empirical estimator
\[
\widehat V_n(\theta)
:=\frac{1}{\alpha-1}\log\!\left(\frac{1}{n}\sum_{i=1}^n e^{(\alpha-1)T_\theta(X_i)}\right)
-\frac{1}{\alpha}\log\!\left(\frac{1}{n}\sum_{i=1}^n e^{\alpha T_\theta(Y_i)}\right).
\]

Define
\[
R_\alpha^{\Theta}(Q\|P):=\sup_{\theta\in\Theta} V(\theta),
\qquad
\widehat R_{\alpha,n}^{\Theta}(Q\|P):=\sup_{\theta\in\Theta} \widehat V_n(\theta).
\]

Then for any $\delta\in(0,1)$, with probability at least $1-\delta$,
\[
\big|\widehat R_{\alpha,n}^{\Theta}(Q\|P)-R_\alpha^{\Theta}(Q\|P)\big|
\;\le\;
\varepsilon_n(\delta),
\]
where
\[
\varepsilon_n(\delta)
=
C_{\alpha,M}
\left(
\sqrt{\frac{d\log\!\left(\frac{K}{\eta}\right)+\log(1/\delta)}{n}}
+
\eta
\right),
\]
for any $\eta>0$, and
\[
C_{\alpha,M}
=
O\!\left(
e^{2|\alpha|M}\max\Big\{\tfrac{1}{|\alpha-1|},\tfrac{1}{|\alpha|}\Big\}
\right).
\]

In particular, choosing $\eta = O(n^{-1/2})$ yields a valid $(1-\delta)$
finite-sample confidence interval
\[
\Big[
\widehat R_{\alpha,n}^{\Theta}(Q\|P)-\varepsilon_n(\delta),
\;
\widehat R_{\alpha,n}^{\Theta}(Q\|P)+\varepsilon_n(\delta)
\Big].
\]
In addition,
the estimator $\widehat R_{\alpha,n}^{\Theta}(Q\|P)$ is consistent in the following sense:
$\big|\widehat R_{\alpha,n}^{\Theta}(Q\|P)-R_\alpha^{\Theta}(Q\|P)\big|\rightarrow 0$ as $n\rightarrow\infty$.
\end{theorem}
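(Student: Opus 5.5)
The plan is a standard uniform-convergence argument: reduce the supremum gap to a uniform deviation, control the two log-mean-exp terms via concentration plus a net over $\Theta$, and absorb discretization error through the Lipschitz assumption. First, since $|\sup_\theta a(\theta)-\sup_\theta b(\theta)|\le \sup_\theta|a(\theta)-b(\theta)|$, it suffices to bound $\sup_{\theta\in\Theta}|\widehat V_n(\theta)-V(\theta)|$. Write $A(\theta)=\mathbb E_Q[e^{(\alpha-1)T_\theta}]$ and $B(\theta)=\mathbb E_P[e^{\alpha T_\theta}]$, with empirical counterparts $\widehat A_n,\widehat B_n$. By uniform boundedness, $e^{(\alpha-1)T_\theta}\in[e^{-|\alpha-1|M},e^{|\alpha-1|M}]$ and $e^{\alpha T_\theta}\in[e^{-|\alpha|M},e^{|\alpha|M}]$. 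Hence all four quantities lie in an interval bounded away from zero, on which $\log$ is Lipschitz with constant $e^{|\alpha-1|M}$, respectively $e^{|\alpha|M}$. This gives
\[
|\widehat V_n(\theta)-V(\theta)|\le \frac{e^{|\alpha-1|M}}{|\alpha-1|}|\widehat A_n(\theta)-A(\theta)|+\frac{e^{|\alpha|M}}{|\alpha|}|\widehat B_n(\theta)-B(\theta)|.
\]

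Next, fix an $\eta$-net $\mathcal N_\eta$ of the ball $\{\|\theta\|\le K\}\cap\Theta$ in $\mathbb R^d$, with $|\mathcal N_\eta|\le (3K/\eta)^d$. For each fixed $\theta\in\mathcal N_\eta$, Hoeffding's inequality applied to the bounded i.i.d. variables $e^{(\alpha-1)T_\theta(X_i)}$ (range at most $2e^{|\alpha-1|M}$) and $e^{\alpha T_\theta(Y_i)}$ (range at most $2e^{|\alpha|M}$) gives tails $2\exp(-nt^2/(2e^{2|\alpha|M}))$ up to constants. A union bound over the net and over both terms, with failure probability $\delta$, yields on an event of probability at least $1-\delta$
\[
\max_{\theta\in\mathcal N_\eta}\big(|\widehat A_n-A|+|\widehat B_n-B|\big)\lesssim e^{|\alpha|M}\sqrt{\frac{d\log(3K/\eta)+\log(4/\delta)}{n}}.
\]
Here I use $|\alpha-1|\le|\alpha|+1$, and I absorb the mild mismatch $e^{|\alpha-1|M}$ versus $e^{|\alpha|M}$ into the constant, taking $M\ge$ a fixed constant if needed.

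For the discretization step, the Lipschitz parameterization and the mean value theorem give $|e^{\alpha T_\theta}-e^{\alpha T_{\theta'}}|\le |\alpha|e^{|\alpha|M}L\|\theta-\theta'\|$ pointwise, and similarly for the $(\alpha-1)$ term. So $A,\widehat A_n,B,\widehat B_n$ all move by at most $O(|\alpha|e^{|\alpha|M}L\eta)$ when $\theta$ is replaced by its nearest net point. The $L$ and $|\alpha|$ factors are treated as part of the $O(\cdot)$ constant (or absorbed by rescaling $\eta$), as the statement implicitly allows. Combining the Hoeffding bound, the discretization error, and the $\log$-Lipschitz factors produces the product $e^{|\alpha|M}\cdot e^{|\alpha|M}=e^{2|\alpha|M}$ times $\max\{1/|\alpha-1|,1/|\alpha|\}$, which is exactly $C_{\alpha,M}$. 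This gives $\varepsilon_n(\delta)$.

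Taking $\eta=n^{-1/2}$ gives $\varepsilon_n(\delta)=O\big(\sqrt{(d\log(Kn)+\log(1/\delta))/n}\big)$, and the confidence interval follows immediately. For consistency, take $\delta_n=n^{-2}$: then $\varepsilon_n(\delta_n)\to0$, and Borel–Cantelli gives almost-sure convergence (convergence in probability is immediate). The main obstacle is bookkeeping rather than conceptual. One must keep the net step uniform in the samples, which works because the Lipschitz bound is pointwise in $z$ and so controls empirical and population quantities simultaneously. One must also make sure the constants, namely the $L$-dependence and the $e^{|\alpha-1|M}$ versus $e^{|\alpha|M}$ factors, honestly fit inside the stated $C_{\alpha,M}$. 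If the absorption of $L$ is objectionable, I would replace $\log(K/\eta)$ by $\log(KL/\eta)$, which only changes logarithmic factors.
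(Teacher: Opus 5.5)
Your proposal is correct up to one constant issue shared with the paper, and it follows the paper's proof essentially step for step: you reduce to $\sup_\theta|\widehat V_n(\theta)-V(\theta)|$ via $|\sup f-\sup g|\le\sup|f-g|$, combine Hoeffding with the Lipschitz bound for $\log$ on $[e^{-|\alpha-1|M},\infty)$ at a fixed $\theta$, take a union bound over an $\eta$-net of size $(3K/\eta)^d$, and pass to all of $\Theta$ via the pointwise Lipschitz assumption, with the paper likewise hiding $L$ inside its $O(\eta)$ term. The issue is your (and the paper's Step~2) absorption of $e^{|\alpha-1|M}$ into $e^{|\alpha|M}$, which is free only for $\alpha\ge 1/2$: for $\alpha\in(0,1/2)$ the ratio $e^{2(|\alpha-1|-|\alpha|)M}$ grows with $M$, so neither $|\alpha-1|\le|\alpha|+1$ nor taking $M$ large rescues it, and the argument honestly gives the exponent $2\max\{|\alpha|,|\alpha-1|\}M$ rather than $2|\alpha|M$ in $C_{\alpha,M}$.
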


\begin{proof}
The proof for the finite-sample bound proceeds in four steps.

\paragraph{Step 1: Concentration for fixed $\theta$.}
Define
\[
A_\theta := \mathbb{E}_{Q}[e^{(\alpha-1)T_\theta(X)}],
\qquad
\widehat A_\theta := \frac{1}{n}\sum_{i=1}^n e^{(\alpha-1)T_\theta(X_i)}.
\]
Since $|T_\theta|\le M$,
\[
e^{-|\alpha-1|M} \le e^{(\alpha-1)T_\theta(X)} \le e^{|\alpha-1|M}.
\]
Thus $\widehat A_\theta$ is an average of i.i.d.\ bounded random variables.
Hoeffding's inequality implies that for all $t>0$,
\[
\mathbb{P}\left(|\widehat A_\theta-A_\theta|\ge t\right)
\le
2\exp\left(-\frac{n t^2}{2e^{2|\alpha-1|M}}\right).
\]

Similarly, define
\[
B_\theta := \mathbb{E}_{P}[e^{\alpha T_\theta(Y)}],
\qquad
\widehat B_\theta := \frac{1}{n}\sum_{i=1}^n e^{\alpha T_\theta(Y_i)},
\]
and obtain
\[
\mathbb{P}\left(|\widehat B_\theta-B_\theta|\ge t\right)
\le
2\exp\left(-\frac{n t^2}{2e^{2|\alpha|M}}\right).
\]

\paragraph{Step 2: From moment errors to log errors.}
Since $A_\theta,\widehat A_\theta \ge e^{-|\alpha-1|M}$,
the logarithm is Lipschitz on this interval with constant $e^{|\alpha-1|M}$.
Thus
\[
|\log \widehat A_\theta-\log A_\theta|
\le
e^{|\alpha-1|M}|\widehat A_\theta-A_\theta|.
\]
An analogous bound holds for $\log \widehat B_\theta$.

Combining both terms,
\[
|\widehat V_n(\theta)-V(\theta)|
\le
\frac{e^{|\alpha-1|M}}{|\alpha-1|}|\widehat A_\theta-A_\theta|
+
\frac{e^{|\alpha|M}}{|\alpha|}|\widehat B_\theta-B_\theta|.
\]
Therefore there exists a constant $c>0$ such that
\[
\mathbb{P}\left(|\widehat V_n(\theta)-V(\theta)|\ge u\right)
\le
4\exp\left(-c\,n\,u^2 e^{-4|\alpha|M}\right).
\]

\paragraph{Step 3: Uniformization over $\Theta$ via covering numbers.}
Let $\mathcal{N}_\eta$ be an $\eta$-net of $\Theta$ in Euclidean norm.
Since $\Theta\subseteq B(0,K)\subset\mathbb{R}^d$,
\[
|\mathcal{N}_\eta| \le \left(\frac{3K}{\eta}\right)^d.
\]
Applying a union bound,
\[
\mathbb{P}\left(
\max_{\vartheta\in\mathcal{N}_\eta}
|\widehat V_n(\vartheta)-V(\vartheta)|
\ge u
\right)
\le
|\mathcal{N}_\eta|\cdot
4\exp\left(-c\,n\,u^2 e^{-4|\alpha|M}\right).
\]
Choosing
\[
u
=
C_{\alpha,M}
\sqrt{\frac{d\log(K/\eta)+\log(1/\delta)}{n}}
\]
ensures the above probability is at most $\delta/2$.

\paragraph{Step 4: Extension from the net to all parameters.}
The Lipschitz assumption implies that both $V(\theta)$ and $\widehat V_n(\theta)$
are Lipschitz in $\theta$ with constant
$O(|\alpha|Le^{2|\alpha|M})$.
Hence for any $\theta\in\Theta$ and nearest net point $\vartheta\in\mathcal{N}_\eta$,
\[
|\widehat V_n(\theta)-V(\theta)|
\le
|\widehat V_n(\vartheta)-V(\vartheta)| + O(\eta).
\]
Taking the supremum over $\theta$ yields
\[
\sup_{\theta\in\Theta}|\widehat V_n(\theta)-V(\theta)|
\le
\max_{\vartheta\in\mathcal{N}_\eta}|\widehat V_n(\vartheta)-V(\vartheta)| + O(\eta).
\]

\paragraph{Step 5: Confidence interval for the supremum.}
Finally, for any functions $f,g$,
\[
|\sup f - \sup g| \le \sup |f-g|.
\]
Applying this inequality completes the proof of the finite-sample bound.

The consistency statement follows from the finite-sample bound.

\end{proof}

\begin{corollary}[DP-auditing via finite-sample DV R\'enyi certificates]
\label{cor:dp-auditing-dv-renyi-app}
Fix $\alpha\in(1,\infty)$ and let $\delta_{\mathrm{CI}}\in(0,1)$.
Consider neighboring datasets $D,D'$ and let $P:=\mathsf{M}(D)$ and $Q:=\mathsf{M}(D')$
denote the output distributions of a (possibly randomized) mechanism $\mathsf{M}$.
Let $\Theta\subset\mathbb{R}^d$ and $\{T_\theta\}_{\theta\in\Theta}$ satisfy the assumptions of
Theorem~\ref{thm:finite-sample-dv-renyi} (boundedness and Lipschitz parameterization).

Suppose we can draw independent samples
$\{Y_i\}_{i=1}^n\stackrel{iid}{\sim}P$ and $\{X_i\}_{i=1}^n\stackrel{iid}{\sim}Q$,
and compute the empirical DV R\'enyi estimator
\[
\widehat R_{\alpha,n}^{\Theta}(Q\|P):=\sup_{\theta\in\Theta}\widehat V_n(\theta)
\quad\text{as defined in Theorem~\ref{thm:finite-sample-dv-renyi}.}
\]
Let $\varepsilon_n(\delta_{\mathrm{CI}})$ be the corresponding confidence radius from
Theorem~\ref{thm:finite-sample-dv-renyi}.
Define the one-sided lower confidence bound (LCB)
\[
\mathrm{LCB}_{\alpha,n}
\;:=\;
\widehat R_{\alpha,n}^{\Theta}(Q\|P)-\varepsilon_n(\delta_{\mathrm{CI}}).
\]

Then with probability at least $1-\delta_{\mathrm{CI}}$ over the samples,
\[
R_\alpha^{\Theta}(Q\|P)\;\ge\;\mathrm{LCB}_{\alpha,n}.
\]
Consequently, for any claimed R\'enyi-DP level $\rho\ge 0$ (at order $\alpha$),
the following \emph{auditing test} is valid:
\[
\text{Reject the null }H_0:\ R_\alpha^{\Theta}(Q\|P)\le \rho
\quad\text{if }\ \mathrm{LCB}_{\alpha,n}>\rho .
\]
This test has Type-I error at most $\delta_{\mathrm{CI}}$:
\[
\sup_{(P,Q):\,R_\alpha^{\Theta}(Q\|P)\le \rho}\ 
\mathbb{P}\big(\mathrm{LCB}_{\alpha,n}>\rho\big)\ \le\ \delta_{\mathrm{CI}}.
\]

Moreover, this yields a conservative $(\varepsilon,\delta)$-DP \emph{violation certificate}:
for any target $\delta\in(0,1)$, define
\[
\varepsilon_{\mathrm{LCB}}(\delta)
:=
\frac{\alpha-1}{\alpha}\,\mathrm{LCB}_{\alpha,n}
+\frac{1}{\alpha}\log\!\Big(\frac{1}{\delta}\Big).
\]
Then, with probability at least $1-\delta_{\mathrm{CI}}$,
the mechanism $\mathsf{M}$ cannot satisfy $(\varepsilon,\delta)$-DP for any
$\varepsilon<\varepsilon_{\mathrm{LCB}}(\delta)$; equivalently,
\[
\text{if }\ \varepsilon<\varepsilon_{\mathrm{LCB}}(\delta),\ \text{ then }\ \mathsf{M}\ \text{violates }(\varepsilon,\delta)\text{-DP}.
\]
\end{corollary}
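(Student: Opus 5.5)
The proof has three layers, and only the last one is substantive.

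\textbf{First layer: the LCB.} On the event of Theorem~\ref{thm:finite-sample-dv-renyi} (taken with $\delta=\delta_{\mathrm{CI}}$), we have $|\widehat R_{\alpha,n}^{\Theta}(Q\|P)-R_\alpha^{\Theta}(Q\|P)|\le\varepsilon_n(\delta_{\mathrm{CI}})$. This event has probability at least $1-\delta_{\mathrm{CI}}$. Keeping only the lower side of the two-sided bound gives $R_\alpha^{\Theta}(Q\|P)\ge\mathrm{LCB}_{\alpha,n}$ on that event.

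\textbf{Second layer: Type-I error.} Fix any pair $(P,Q)$ with $R_\alpha^{\Theta}(Q\|P)\le\rho$. On the good event, $\mathrm{LCB}_{\alpha,n}\le R_\alpha^{\Theta}(Q\|P)\le\rho$, so the test does not reject. Hence $\mathbb{P}(\mathrm{LCB}_{\alpha,n}>\rho)\le\delta_{\mathrm{CI}}$. The constant $\varepsilon_n(\delta_{\mathrm{CI}})$ depends only on $(\alpha,M,L,K,d,n)$ and not on $(P,Q)$, so the bound holds uniformly and the supremum over the null set is also at most $\delta_{\mathrm{CI}}$. This is exactly the monotone-rejection structure of Lemma~\ref{lem:conversion}.

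\textbf{Third layer: the $(\varepsilon,\delta)$ certificate.} I argue by contraposition on the good event. Suppose $\mathsf{M}$ were $(\varepsilon,\delta)$-DP with $\varepsilon<\varepsilon_{\mathrm{LCB}}(\delta)$. I want to derive
\[
R_\alpha^{\Theta}(Q\|P)\;\le\;\frac{\alpha\varepsilon-\log(1/\delta)}{\alpha-1},
\]
which rearranges to $\varepsilon\ge\frac{\alpha-1}{\alpha}R_\alpha^{\Theta}(Q\|P)+\frac1\alpha\log(1/\delta)$. Combining this with $R_\alpha^{\Theta}\ge\mathrm{LCB}_{\alpha,n}$ gives $\varepsilon\ge\varepsilon_{\mathrm{LCB}}(\delta)$, which is a contradiction.

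To get the displayed inequality, I will proceed in this order.
\begin{enumerate}
\item Use $R_\alpha^{\Theta}\le R_\alpha$ (the class restriction only lowers the supremum), and pass to $D_\alpha=\alpha R_\alpha$.
\item Invoke the RDP-to-DP relationship quoted from \cite{mironov2017renyi} in the Preliminaries, in the form $\varepsilon_{\mathrm{DP}}=\varepsilon_\alpha+\frac{\log(1/\delta)}{\alpha-1}$.
\item Read this relation in the converse direction: $(\varepsilon,\delta)$-DP should force $D_\alpha$, or at least the class-restricted version $\alpha R_\alpha^{\Theta}$ evaluated on critics bounded by $M$, below the level at which the conversion would output $\varepsilon$.
\item For that converse, bound $\mathbb{E}_Q[e^{(\alpha-1)T_\theta}]$ by splitting $\Omega$ along the DP privacy-loss event $\{\log(dQ/dP)>\varepsilon\}$. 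Off this event the likelihood ratio is at most $e^\varepsilon$. On it, the $Q$-mass is at most $\delta$, and the integrand is at most $e^{(\alpha-1)M}$ because $|T_\theta|\le M$.
\item Match the resulting inequality against the $P$-term $\frac1\alpha\log\mathbb{E}_P[e^{\alpha T_\theta}]$ by Hölder, using the conjugate exponents $\alpha/(\alpha-1)$ and $\alpha$ that appear in the variational formula.
\end{enumerate}

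\textbf{Main obstacle.} The converse inequality in the last two steps is the hard part. $(\varepsilon,\delta)$-DP does not in general control Rényi divergence, so the inversion of the conversion formula must be justified only for the bounded-critic functional $R_\alpha^{\Theta}$. I expect the tail piece to contribute an additive term of order $\delta e^{(\alpha-1)M}$. My first attempt will be to show that this term is absorbed when the Hölder step is arranged to produce the exponents $\frac{\alpha-1}{\alpha}$ and $\frac1\alpha$ in $\varepsilon_{\mathrm{LCB}}(\delta)$. If it is not fully absorbed, I will track the leftover explicitly and check that it has the correct sign to keep the certificate conservative as stated.
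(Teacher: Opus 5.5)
\textbf{The third layer fails because its target inequality is false.} Your first two layers are correct and coincide with the paper's argument: you rearrange the two-sided bound, and you note that $\varepsilon_n(\delta_{\mathrm{CI}})$ does not depend on $(P,Q)$, so the Type-I bound is uniform over the null. The problem is the third layer. The inequality you set out to derive is
\[
R_\alpha^{\Theta}(Q\|P)\le\frac{\alpha\varepsilon-\log(1/\delta)}{\alpha-1}\quad\text{under }(\varepsilon,\delta)\text{-DP},
\]
and it is false. So the certificate as stated is false too.

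Take $\Theta=\{0\}$ with $T_0\equiv 0$, which is bounded and trivially Lipschitz. Then $V(0)=\widehat V_n(0)=0$ for every pair $(P,Q)$ and every sample. Hence $R^{\Theta}_\alpha=\widehat R^{\Theta}_{\alpha,n}=0$ and $\mathrm{LCB}_{\alpha,n}=-\varepsilon_n(\delta_{\mathrm{CI}})$ deterministically. A data-independent mechanism ($P=Q$) is $(0,0)$-DP, hence $(\varepsilon,\delta)$-DP for every $\varepsilon\ge 0$. Yet your target inequality would require $0\le(\alpha\varepsilon-\log(1/\delta))/(\alpha-1)$, which fails for every $\varepsilon<\log(1/\delta)/\alpha$.

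Equivalently, the corollary asserts, without using any data, that every mechanism violates $(\varepsilon,\delta)$-DP for all $\varepsilon<\frac1\alpha\log(1/\delta)-\frac{\alpha-1}{\alpha}\varepsilon_n(\delta_{\mathrm{CI}})$. That threshold is positive once $\delta$ is small and $n$ is large. Any $\Theta$ containing a constant critic gives the same failure, since a constant critic has $V=0$.

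The root issue is the one you flagged in your step 3. The term $+\log(1/\delta)/(\alpha-1)$ in the RDP-to-DP conversion is slack incurred when turning divergence \emph{upper} bounds into $\varepsilon$ upper bounds, and it cannot be inverted. No choice of H\"older exponents or bounded critics rescues it, because the counterexample already uses a bounded critic.

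The paper's proof makes exactly this inversion. It calls ``$R_\alpha>\varepsilon-\log(1/\delta)/(\alpha-1)$ implies a violation'' the contrapositive of the conversion, but it is the converse. It also ends with the threshold $\mathrm{LCB}_{\alpha,n}+\log(1/\delta)/(\alpha-1)$, which is not $\varepsilon_{\mathrm{LCB}}(\delta)$. So there is no missing idea to supply here: the third claim is not provable as stated.

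\textbf{What your steps 4--5 actually prove, once repaired, is a different and valid certificate.} Step 4 as written uses ``the $Q$-mass of $\{\log(dQ/dP)>\varepsilon\}$ is at most $\delta$.'' $(\varepsilon,\delta)$-DP does not imply this; it only gives $Q(S)-e^{\varepsilon}P(S)\le\delta$ for measurable $S$. That weaker fact suffices. Applying the layer-cake formula to $f=e^{(\alpha-1)T_\theta}\le e^{(\alpha-1)M}$ gives $\mathbb{E}_Q[f]\le e^{\varepsilon}\mathbb{E}_P[f]+\delta e^{(\alpha-1)M}$. Your H\"older step with exponents $\alpha/(\alpha-1)$ and $\alpha$ gives $\mathbb{E}_P[e^{(\alpha-1)T_\theta}]\le(\mathbb{E}_P[e^{\alpha T_\theta}])^{(\alpha-1)/\alpha}$. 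Combining these with $\mathbb{E}_P[e^{\alpha T_\theta}]\ge e^{-\alpha M}$ yields, for every $(\varepsilon,\delta)$-DP mechanism,
\[
R_\alpha^{\Theta}(Q\|P)\le\frac{1}{\alpha-1}\log\bigl(e^{\varepsilon}+\delta e^{2(\alpha-1)M}\bigr).
\]
So on the good event, $\mathrm{LCB}_{\alpha,n}$ exceeding the right-hand side certifies a violation. Here $\delta$ enters as a penalty, not as the gain $-\log(1/\delta)/(\alpha-1)$ that $\varepsilon_{\mathrm{LCB}}(\delta)$ would require. The tail term is never absorbed; its leftover has the wrong sign for the stated certificate, which is the outcome you were worried about.
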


\begin{proof}
The first claim is immediate from Theorem~\ref{thm:finite-sample-dv-renyi}:
with probability at least $1-\delta_{\mathrm{CI}}$,
\[
\big|\widehat R_{\alpha,n}^{\Theta}(Q\|P)-R_\alpha^{\Theta}(Q\|P)\big|
\le \varepsilon_n(\delta_{\mathrm{CI}}),
\]
which rearranges to
$R_\alpha^{\Theta}(Q\|P)\ge \widehat R_{\alpha,n}^{\Theta}(Q\|P)-\varepsilon_n(\delta_{\mathrm{CI}})
=\mathrm{LCB}_{\alpha,n}$.

For the hypothesis test, under the null $H_0$ we have
$R_\alpha^{\Theta}(Q\|P)\le \rho$.
On the event $\{R_\alpha^{\Theta}(Q\|P)\ge \mathrm{LCB}_{\alpha,n}\}$ (which holds with
probability at least $1-\delta_{\mathrm{CI}}$),
the inequality $\mathrm{LCB}_{\alpha,n}>\rho$ cannot occur.
Hence
\[
\mathbb{P}_{H_0}\big(\mathrm{LCB}_{\alpha,n}>\rho\big)
\le
\mathbb{P}\big(R_\alpha^{\Theta}(Q\|P)< \mathrm{LCB}_{\alpha,n}\big)
\le \delta_{\mathrm{CI}}.
\]

For the $(\varepsilon,\delta)$-DP violation certificate, recall the standard implication:
if a mechanism is $\rho$-RDP at order $\alpha>1$, i.e.
$R_\alpha(Q\|P)\le \rho$ for all neighboring pairs, then it is also
$(\varepsilon,\delta)$-DP with
\[
\varepsilon = \rho + \frac{\log(1/\delta)}{\alpha-1}.
\]
Equivalently (contrapositive), if for some neighboring $(P,Q)$ we have
\[
R_\alpha(Q\|P) > \varepsilon - \frac{\log(1/\delta)}{\alpha-1},
\]
then the mechanism violates $(\varepsilon,\delta)$-DP.
Applying this contrapositive with the class-restricted divergence and the lower bound
$R_\alpha^{\Theta}(Q\|P)\ge \mathrm{LCB}_{\alpha,n}$ shows that on the same high-probability
event, whenever
\[
\mathrm{LCB}_{\alpha,n} > \varepsilon - \frac{\log(1/\delta)}{\alpha-1},
\]
the mechanism cannot satisfy $(\varepsilon,\delta)$-DP.
Rearranging gives the stated threshold
\[
\varepsilon < \mathrm{LCB}_{\alpha,n} + \frac{\log(1/\delta)}{\alpha-1}.
\]
\end{proof}

\begin{theorem}[Lower bound matching Theorem~\ref{thm:finite-sample-dv-renyi-app} up to log factors]
\label{thm:dv-renyi-lower-bound-app}
Fix \(\alpha\in\mathbb R_{>0}\setminus\{1\}\). There exist constants
\(c,c_0,c_1>0\), depending only on \(\alpha\), such that the following holds.
For every sufficiently large \(d\), there exist a measurable space \(\Omega\), a
parameter set \(\Theta\subset\mathbb R^d\), a critic class
\(\{T_\theta:\Omega\to\mathbb R\}_{\theta\in\Theta}\), and a family of
distribution pairs
\[
    \mathcal P_d=\{(P,Q_u):u\in\mathcal U_d\}
\]
such that:
\[
    \|\theta\|_2\le 1,\qquad
    |T_\theta(z)|\le 1,\qquad
    |T_\theta(z)-T_{\theta'}(z)|\le c_0\|\theta-\theta'\|_2
\]
for all \(\theta,\theta'\in\Theta\) and all \(z\in\Omega\), and the following
minimax lower bound holds. Let
\[
    V_{P,Q}(\theta)
    =
    \frac{1}{\alpha-1}
    \log \mathbb E_Q\!\left[e^{(\alpha-1)T_\theta(X)}\right]
    -
    \frac{1}{\alpha}
    \log \mathbb E_P\!\left[e^{\alpha T_\theta(Y)}\right].
\]
For any estimator \(\widehat V_n:\Theta\to\mathbb R\) based on \(n\) i.i.d.
samples from \(Q\) and \(n\) i.i.d. samples from \(P\),
\[
    \inf_{\widehat V_n}
    \sup_{(P,Q)\in\mathcal P_d}
    \Pr_{P,Q}\!\left[
        \sup_{\theta\in\Theta}
        \left|\widehat V_n(\theta)-V_{P,Q}(\theta)\right|
        \ge \varepsilon
    \right]
    \ge \frac14
\]
whenever
\[
    n \le c\,\frac{d}{\varepsilon^2},
\]
for all \(0<\varepsilon\le c_1\). Consequently, any distribution-free
confidence band that controls
\(\sup_{\theta\in\Theta}|\widehat V_n(\theta)-V_{P,Q}(\theta)|\) must have
sample complexity at least \(\Omega(d/\varepsilon^2)\) in general.
\end{theorem}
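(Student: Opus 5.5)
We will prove the bound by reducing uniform estimation of $V_{P,Q}$ to a multiple hypothesis testing problem over a hypercube, and then applying Fano's inequality (or Assouad's lemma).

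\textbf{Construction.} Keep $P$ fixed and let only $Q_u$ vary, so that the $n$ samples from $P$ carry no information about $u$. Take $\Omega=\{1,\dots,2d\}$, grouped into $d$ pairs $\{2j-1,2j\}$. Let $P$ be uniform on $\Omega$. For a sign vector $u\in\mathcal U_d\subseteq\{-1,+1\}^d$, set
\[
Q_u(2j-1)=\tfrac{1+\gamma u_j}{2d},\qquad Q_u(2j)=\tfrac{1-\gamma u_j}{2d},
\]
with $\gamma\asymp\varepsilon$ to be tuned. For the critics, take $\Theta=\{\theta\in\mathbb R^d:\|\theta\|_2\le 1\}$ and
\[
T_\theta(2j-1)=\sqrt d\,\theta_j\,\tau,\qquad T_\theta(2j)=-\sqrt d\,\theta_j\,\tau,
\]
with $\tau$ a small constant. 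One detail to check: $|T_\theta|\le 1$ needs $\sqrt d|\theta_j|\tau\le1$, so either restrict $\Theta$ to $\|\theta\|_\infty\le 1/\sqrt d$ or note that only the test directions $\theta_v=v/\sqrt d$ with $v\in\{\pm1\}^d$ matter. The Lipschitz constant is then $c_0=\sqrt d\,\tau$, which is not dimension-free as the statement requires. To repair this I would instead use $\theta_v=v/\sqrt d$ and $T_\theta(z)=\tau\langle\theta,\phi(z)\rangle$, where $\phi(z)=\pm\sqrt d\,e_j$ is scaled so that boundedness holds on the test points; I would then replace Euclidean Lipschitzness by restricting $\Theta$ to the finite set $\{\theta_v\}$ together with $0$. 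This keeps $|T_\theta(z)-T_{\theta'}(z)|\le c_0\|\theta-\theta'\|_2$, because distinct test points are separated by at least $2/\sqrt d$. Making the critic class simultaneously bounded by $1$, Lipschitz with an $O(1)$ constant, and expressive enough to detect $u$ is the step I expect to be most delicate.

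\textbf{Separation.} $P$ is symmetric under the sign flip within each pair, so $\mathbb E_P[e^{\alpha T_\theta}]$ does not depend on $u$. Expanding to second order in $T_\theta$, for $\theta=\theta_v$ we get
\[
\mathbb E_{Q_u}\!\left[e^{(\alpha-1)T_\theta}\right]=\mathbb E_P\!\left[e^{(\alpha-1)T_\theta}\right]+\frac{\gamma}{d}\sum_j u_j\,\sinh\!\big((\alpha-1)\tau v_j\big).
\]
Hence $V_{P,Q_u}(\theta_v)-V_{P,Q_{u'}}(\theta_v)\asymp \frac{\gamma\tau}{d}\langle v,u-u'\rangle$. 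Choosing $v=u$ shows that whenever $u,u'$ differ in $\Omega(d)$ coordinates, the functions $V_{P,Q_u}$ and $V_{P,Q_{u'}}$ differ by at least $c\gamma$ somewhere on $\Theta$, with $c$ depending only on $\alpha$ and $\tau$.

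\textbf{Fano step.} By Varshamov--Gilbert, pick $\mathcal U_d$ with $|\mathcal U_d|\ge e^{d/8}$ and pairwise Hamming distance at least $d/4$. This makes the functions $V_u$ pairwise $2\varepsilon$-separated in sup norm once $\gamma=C\varepsilon$. An estimator with uniform error below $\varepsilon$ then identifies $u$ by nearest-neighbour decoding. The divergences satisfy $\mathrm{KL}(Q_u^{\otimes n}\|Q_{u'}^{\otimes n})\le n\,\chi^2(Q_u\|Q_{u'})\lesssim n\gamma^2$. Fano's inequality gives an error probability of at least $1-\frac{n\gamma^2 C'+\log 2}{d/8}$, which is at least $1/4$ when $n\le c\,d/\varepsilon^2$ and $d$ is large. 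The requirement $\varepsilon\le c_1$ ensures that $\gamma<1$, so each $Q_u$ is a valid distribution. The final consequence about confidence bands then follows immediately.
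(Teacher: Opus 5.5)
\textbf{Gap: the Lipschitz repair does not give a dimension-free constant.} Your strategy matches the paper's: fixed uniform $P$, $Q_u$ tilted by $\pm\gamma$ along a Varshamov--Gilbert packing, critics $\pm\tau v_j$ indexed by $\theta_v=v/\sqrt d$, a KL bound of order $n\gamma^2$, and Fano. Your separation identity $\mathbb E_{Q_u}[e^{(\alpha-1)T_{\theta_v}}]=\cosh a+\gamma\sinh(a)\rho(u,v)$, with $a=(\alpha-1)\tau$, is in fact exact rather than a second-order expansion. The step that fails is the one you flagged. Suppose $\Theta=\{v/\sqrt d: v\in\{\pm1\}^d\}\cup\{0\}$. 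Two parameters that differ in one coordinate $j$ are at distance exactly $2/\sqrt d$, yet the critics differ by $2\tau$ at both points $2j-1,2j$. The best constant is therefore $c_0=\tau\sqrt d$: the separation $2/\sqrt d$ you invoke produces exactly the dimension dependence you were trying to remove.

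The missing idea is to index $\Theta$ by the packing only, $\Theta=\{\theta_v: v\in\mathcal U_d\}$, optionally together with $0$. Since $d_H(v,v')\ge d/4$, you get $\|\theta_v-\theta_{v'}\|_2=2\sqrt{d_H(v,v')/d}\ge 1$. Hence $|T_{\theta_v}(z)-T_{\theta_{v'}}(z)|\le 2\tau\le 2\tau\|\theta_v-\theta_{v'}\|_2$. Also $\|\theta_v\|_2=1$ with a critic gap of $\tau$ to $0$, so $c_0=2\tau$. This costs nothing, because your separation and decoding steps only evaluate $V$ at $\theta_u$ with $u\in\mathcal U_d$. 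It is exactly how the paper obtains a dimension-free constant.

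\textbf{Comparison with the paper once fixed.} With that change your proof is correct, and it differs from the paper's in useful ways.
\begin{itemize}
\item The paper takes $\Omega=\{1,\dots,d\}$ with $Q_u(i)=(1+\delta u_i)/d$. This forces every $u$ to be balanced: for normalization, for $\mathbb E_P[e^{\alpha T_{\theta^v}}]=\cosh(\alpha\tau)$ to be independent of $v$ (needed by its $\arg\max_v \widehat V_n(\theta^v)$ decoder), and for its exact KL computation. It therefore needs a Gilbert--Varshamov packing on the middle slice of the cube.
\item Your pairing of $\Omega$ into $\{2j-1,2j\}$ makes normalization automatic, so the plain Varshamov--Gilbert bound suffices.
\item Your sup-norm nearest-neighbour decoder compares $V_u$ and $V_{u'}$ at a common $\theta$, so the $P$-term cancels with no balance condition.
\item Your $\chi^2$ bound replaces the paper's exact KL computation.
\end{itemize}
To make your ``$\asymp$'' rigorous, use the argument the paper uses. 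The map $r\mapsto\frac{1}{\alpha-1}\log(\cosh a+\gamma\sinh(a)r)$ has derivative at least $c_\alpha\gamma$ on $[-1,1]$ for $\gamma\le 1/2$. This gives $V_u(\theta_u)-V_{u'}(\theta_u)\ge c_\alpha\gamma/2$ whenever $\rho(u,u')\le 1/2$.
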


\begin{proof}
We prove the result by a Fano reduction from multi-way hypothesis testing~\citep{CoverT2006}. The
construction is chosen so that the population DV objective
\(\theta\mapsto V_{P,Q}(\theta)\) encodes a hidden \(d\)-bit vector, while the
KL divergence between the corresponding sample distributions remains of order
\(n\varepsilon^2\)~\cite{Alabi26}. This yields the desired \(d/\varepsilon^2\) lower bound.

\paragraph{Step 1: A balanced packing.}
Let \(d\) be even. 
(For odd \(d\), apply the construction in dimension \(d-1\) and embed it into
\(\mathbb R^d\) by adding one unused coordinate; this changes constants only.)
By the Gilbert--Varshamov bound restricted to the middle
slice of the hypercube~\citep{bok:MW}, there exists a set
\[
    \mathcal U_d\subseteq \{-1,+1\}^d
\]
such that every \(u\in\mathcal U_d\) is balanced,
\[
    \sum_{i=1}^d u_i=0,
\]
and
\[
    |\mathcal U_d|\ge \exp(c_{\mathrm{VG}}d),
\]
for a universal constant \(c_{\mathrm{VG}}>0\), and for every distinct
\(u,v\in\mathcal U_d\),
\[
    \frac d4 \le d_H(u,v)\le \frac{3d}{4}.
\]
Equivalently, writing
\[
    \rho(u,v):=\frac1d\sum_{i=1}^d u_i v_i,
\]
we have
\[
    \rho(u,u)=1,
    \qquad
    \rho(u,v)\le \frac12
    \quad\text{for all }u\ne v.
\]
Indeed, since
\[
    \rho(u,v)=1-\frac{2d_H(u,v)}{d},
\]
the lower bound \(d_H(u,v)\ge d/4\) gives \(\rho(u,v)\le 1/2\).

\paragraph{Step 2: Parameter set and critic class.}
Let
\[
    \Theta:=\left\{\theta^u:=\frac{u}{\sqrt d}:u\in\mathcal U_d\right\}
    \subset \mathbb R^d.
\]
Then \(\|\theta^u\|_2=1\) for every \(u\in\mathcal U_d\). Let the sample space be
\[
    \Omega:=\{1,2,\ldots,d\}.
\]
Fix a constant \(\tau\in(0,1]\), to be chosen sufficiently small depending only
on \(\alpha\). For \(\theta^u\in\Theta\), define
\[
    T_{\theta^u}(i):=\tau u_i,\qquad i\in\Omega.
\]
Then \(|T_{\theta^u}(i)|\le \tau\le 1\).

We next verify Lipschitzness. For distinct \(u,v\in\mathcal U_d\),
\[
    |T_{\theta^u}(i)-T_{\theta^v}(i)|
    =
    \tau |u_i-v_i|
    \le 2\tau.
\]
On the other hand,
\[
    \|\theta^u-\theta^v\|_2
    =
    \frac{1}{\sqrt d}\|u-v\|_2
    =
    \frac{2}{\sqrt d}\sqrt{d_H(u,v)}
    \ge 1,
\]
because \(d_H(u,v)\ge d/4\). Therefore
\[
    |T_{\theta^u}(i)-T_{\theta^v}(i)|
    \le 2\tau \|\theta^u-\theta^v\|_2.
\]
Thus the critic class is Lipschitz on \(\Theta\) with constant \(L=2\tau\), and
has \(K=1\), \(M=\tau\), all absolute constants. Since the theorem only needs a
critic class indexed by \(\Theta\), no extension outside \(\Theta\) is required.
If one wants \(\Theta\) to be a full subset with an extension to all of
\(\mathbb R^d\), one may apply the McShane extension theorem~\citep{McSHANE1934ExtensionOR,Whitney34} pointwise in
\(i\), preserving the same Lipschitz constant and boundedness after clipping.

\paragraph{Step 3: A family of distributions.}
Let \(P\) be the uniform distribution on \(\Omega\):
\[
    P(i)=\frac1d,\qquad i=1,\ldots,d.
\]
For each hidden vector \(u\in\mathcal U_d\), define \(Q_u\) by
\[
    Q_u(i):=\frac{1+\delta u_i}{d},
    \qquad i=1,\ldots,d,
\]
where \(0<\delta\le 1/2\). Since \(u\) is balanced,
\[
    \sum_{i=1}^d Q_u(i)
    =
    \frac1d\sum_{i=1}^d (1+\delta u_i)
    =
    1+\frac{\delta}{d}\sum_{i=1}^d u_i
    =
    1.
\]
Moreover, \(Q_u(i)>0\) for all \(i\), so \(Q_u\) is a valid probability
distribution.

The statistical experiment associated with \(u\) consists of \(n\) i.i.d.
samples from \(Q_u\) and \(n\) i.i.d. samples from the fixed distribution \(P\).
The samples from \(P\) carry no information about \(u\), but they are included
to match the DV estimation setting.

\paragraph{Step 4: Compute the DV functional.}
Fix \(u,v\in\mathcal U_d\). We evaluate the DV objective under the pair
\((P,Q_u)\) at the critic \(\theta^v\). Write
\[
    a:=(\alpha-1)\tau.
\]
First,
\[
    \mathbb E_{Q_u}\!\left[e^{(\alpha-1)T_{\theta^v}(X)}\right]
    =
    \sum_{i=1}^d \frac{1+\delta u_i}{d} e^{a v_i}.
\]
Because \(v\) is balanced,
\[
    \frac1d\sum_{i=1}^d e^{a v_i}
    =
    \frac{e^a+e^{-a}}{2}
    =
    \cosh(a).
\]
Also,
\[
    \frac1d\sum_{i=1}^d u_i e^{a v_i}
    =
    \frac1d\sum_{i=1}^d u_i\bigl(\cosh(a)+v_i\sinh(a)\bigr).
\]
Since \(u\) is balanced, \(\sum_i u_i=0\), so
\[
    \frac1d\sum_{i=1}^d u_i e^{a v_i}
    =
    \sinh(a)\frac1d\sum_{i=1}^d u_i v_i
    =
    \sinh(a)\rho(u,v).
\]
Therefore
\[
    \mathbb E_{Q_u}\!\left[e^{(\alpha-1)T_{\theta^v}(X)}\right]
    =
    \cosh(a)+\delta\sinh(a)\rho(u,v).
\]
Similarly, with
\[
    b:=\alpha\tau,
\]
we have
\[
    \mathbb E_P\!\left[e^{\alpha T_{\theta^v}(Y)}\right]
    =
    \frac1d\sum_{i=1}^d e^{b v_i}
    =
    \cosh(b),
\]
again because \(v\) is balanced. Hence
\[
    V_u(\theta^v)
    :=
    V_{P,Q_u}(\theta^v)
    =
    \frac{1}{\alpha-1}
    \log\!\left(\cosh(a)+\delta\sinh(a)\rho(u,v)\right)
    -
    \frac1\alpha\log\cosh(b).
\]
The second term is independent of both \(u\) and \(v\). Thus separation of the
DV objective is controlled by the first term.

\paragraph{Step 5: Separation of the correct critic.}
Define
\[
    F(r)
    :=
    \frac{1}{\alpha-1}
    \log\!\left(\cosh(a)+\delta\sinh(a)r\right),
    \qquad r\in[-1,1].
\]
For \(\delta\le 1/2\), the argument of the logarithm is strictly positive,
provided \(\tau\) is fixed. Differentiating gives
\[
    F'(r)
    =
    \frac{\delta\sinh(a)}
    {(\alpha-1)\left(\cosh(a)+\delta\sinh(a)r\right)}.
\]
Since \(a=(\alpha-1)\tau\), the quantity
\[
    \frac{\sinh(a)}{\alpha-1}
\]
is positive for every \(\alpha\ne 1\). Therefore \(F\) is increasing in \(r\).
Moreover, because \(\tau\) is a fixed constant depending only on \(\alpha\), and
\(\delta\le 1/2\), there exists a constant \(c_\alpha>0\), depending only on
\(\alpha\) and \(\tau\), such that
\[
    F'(r)\ge c_\alpha \delta
    \qquad\text{for all }r\in[-1,1].
\]
Indeed, the denominator is bounded above and below by positive constants
depending only on \(\alpha\) and \(\tau\).

For the true index \(u\), we have \(\rho(u,u)=1\). For any distinct
\(v\in\mathcal U_d\), we have \(\rho(u,v)\le 1/2\). Therefore
\[
    V_u(\theta^u)-V_u(\theta^v)
    =
    F(1)-F(\rho(u,v))
    \ge
    F(1)-F(1/2)
    \ge
    \frac{c_\alpha}{2}\delta.
\]
Let
\[
    \Delta:=\frac{c_\alpha}{2}\delta.
\]
Then, for every \(u\in\mathcal U_d\),
\[
    V_u(\theta^u)
    \ge
    \max_{v\ne u} V_u(\theta^v)+\Delta.
\]

\paragraph{Step 6: Uniform estimation implies decoding.}
Suppose an estimator \(\widehat V_n:\Theta\to\mathbb R\) satisfies
\[
    \sup_{\theta\in\Theta}
    |\widehat V_n(\theta)-V_u(\theta)|
    <
    \frac{\Delta}{4}.
\]
Define the decoder
\[
    \widehat u
    \in
    \arg\max_{v\in\mathcal U_d} \widehat V_n(\theta^v).
\]
Then \(\widehat u=u\). To see this, for every \(v\ne u\),
\[
    \widehat V_n(\theta^u)
    \ge
    V_u(\theta^u)-\frac{\Delta}{4}
    \ge
    V_u(\theta^v)+\Delta-\frac{\Delta}{4}
    =
    V_u(\theta^v)+\frac{3\Delta}{4}
    \ge
    \widehat V_n(\theta^v)+\frac{\Delta}{2}.
\]
Thus \(\theta^u\) is the unique maximizer of \(\widehat V_n\) over the packing
points. Consequently,
\[
    \Pr_u(\widehat u\ne u)
    \le
    \Pr_u\!\left[
        \sup_{\theta\in\Theta}
        |\widehat V_n(\theta)-V_u(\theta)|
        \ge
        \frac{\Delta}{4}
    \right].
\]

\paragraph{Step 7: KL control.}

We next bound the KL divergence between two hypotheses. For
\(u,v\in\mathcal U_d\),
\[
    D_{\mathrm{KL}}(Q_u\|Q_v)
    =
    \sum_{i=1}^d
    \frac{1+\delta u_i}{d}
    \log\frac{1+\delta u_i}{1+\delta v_i}.
\]
Since both \(u\) and \(v\) are balanced, the number of coordinates with
\((u_i,v_i)=(1,-1)\) equals the number of coordinates with
\((u_i,v_i)=(-1,1)\). Let this number be \(m\). Then
\(d_H(u,v)=2m\). Hence
\[
D_{\rm KL}(Q_u\|Q_v)
=
\frac{m}{d}(1+\delta)\log\frac{1+\delta}{1-\delta}
+
\frac{m}{d}(1-\delta)\log\frac{1-\delta}{1+\delta}.
\]
Therefore
\[
D_{\rm KL}(Q_u\|Q_v)
=
\frac{2m\delta}{d}\log\frac{1+\delta}{1-\delta}.
\]
For \(0<\delta\le 1/2\),
\[
\log\frac{1+\delta}{1-\delta}\le C\delta,
\]
so
\[
D_{\rm KL}(Q_u\|Q_v)
\le
C\delta^2\frac{m}{d}
\le C\delta^2.
\]


Applying this coordinatewise gives
\[
    D_{\mathrm{KL}}(Q_u\|Q_v)
    \le
    C\delta^2\frac{d_H(u,v)}{d}
    \le
    C\delta^2.
\]
The full observation law under \(u\) is
\[
    \mathbb P_u
    =
    Q_u^{\otimes n}\otimes P^{\otimes n}.
\]
Since \(P\) is the same under all hypotheses,
\[
    D_{\mathrm{KL}}(\mathbb P_u\|\mathbb P_v)
    =
    nD_{\mathrm{KL}}(Q_u\|Q_v)
    \le
    Cn\delta^2.
\]

\paragraph{Step 8: Fano's inequality.}
Let \(U\) be uniformly distributed on \(\mathcal U_d\), and let the samples be
drawn according to \(\mathbb P_U\). By the KL bound above,
\[
    I(U;\text{samples})
    \le
    \frac{1}{|\mathcal U_d|^2}
    \sum_{u,v\in\mathcal U_d}
    D_{\mathrm{KL}}(\mathbb P_u\|\mathbb P_v)
    \le
    Cn\delta^2.
\]
Fano's inequality~\citep{CoverT2006} gives, for every decoder \(\widehat U\),
\[
    \Pr(\widehat U\ne U)
    \ge
    1-
    \frac{I(U;\text{samples})+\log 2}{\log|\mathcal U_d|}.
\]
Since \(\log|\mathcal U_d|\ge c_{\mathrm{VG}}d\), if
\[
    Cn\delta^2+\log 2
    \le
    \frac12 c_{\mathrm{VG}}d,
\]
then
\[
    \Pr(\widehat U\ne U)\ge \frac12.
\]
In particular, there exists a universal constant \(c'>0\) such that if
\[
    n\le c'\frac{d}{\delta^2},
\]
then every decoder has error probability at least \(1/2\).

\paragraph{Step 9: Convert testing hardness into estimation hardness.}
Set
\[
    \delta:=\frac{8\varepsilon}{c_\alpha}.
\]
For \(\varepsilon\le c_1\), with \(c_1>0\) sufficiently small depending only on
\(\alpha\), this choice satisfies \(\delta\le 1/2\). With this choice,
\[
    \frac{\Delta}{4}
    =
    \frac{1}{4}\cdot \frac{c_\alpha}{2}\delta
    =
    \frac{c_\alpha\delta}{8}
    =
    \varepsilon.
\]
Suppose, for contradiction, that there exists an estimator \(\widehat V_n\) such
that
\[
    \sup_{u\in\mathcal U_d}
    \Pr_u\!\left[
        \sup_{\theta\in\Theta}
        |\widehat V_n(\theta)-V_u(\theta)|
        \ge \varepsilon
    \right]
    <
    \frac14.
\]
Then the decoder
\[
    \widehat u
    \in
    \arg\max_{v\in\mathcal U_d} \widehat V_n(\theta^v)
\]
would satisfy
\[
    \sup_{u\in\mathcal U_d}
    \Pr_u(\widehat u\ne u)
    <
    \frac14,
\]
contradicting Fano's inequality whenever
\[
    n\le c'\frac{d}{\delta^2}
    =
    c'\frac{c_\alpha^2}{64}\frac{d}{\varepsilon^2}.
\]
Absorbing constants into \(c>0\), we obtain
\[
    n\le c\frac{d}{\varepsilon^2}
    \quad\Longrightarrow\quad
    \inf_{\widehat V_n}
    \sup_{u\in\mathcal U_d}
    \Pr_u\!\left[
        \sup_{\theta\in\Theta}
        |\widehat V_n(\theta)-V_u(\theta)|
        \ge \varepsilon
    \right]
    \ge
    \frac14.
\]
This is the desired lower bound.
\end{proof}

\begin{remark}[Tightness of Theorem~\ref{thm:finite-sample-dv-renyi-app}]
Theorem~\ref{thm:finite-sample-dv-renyi-app} shows that the leading $d/\varepsilon^2$
sample complexity in Theorem~\ref{thm:finite-sample-dv-renyi} is minimax-optimal for class-restricted DV R\'enyi estimation,
up to logarithmic factors coming from covering/metric entropy.
\end{remark}

\subsection{Group Privacy and Multiple Canaries}
One can have multiple canaries in our auditing set up such that the datasets we input into the mechanism have a distance greater than  one. In this case, we follow the group privacy result from \cite{mironov2017renyi}:
\begin{proposition}[Proposition 2 in \cite{mironov2017renyi}]
If $f : \mathcal{D} \to \mathcal{R}$ is $(\alpha,\varepsilon)$-RDP; 
$g : \mathcal{D}' \to \mathcal{D}$ is $2^c$-stable and $\alpha \ge 2^{c+1}$, 
then $f \circ g$ is $(\alpha / 2^c, 3^c \varepsilon)$-RDP.
\end{proposition}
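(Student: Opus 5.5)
The plan is to prove the statement by induction on $c$. The main tool is a weak triangle inequality for R\'enyi divergence, which I will derive from H\"older's inequality, together with monotonicity of $D_\alpha$ in the order $\alpha$. Throughout I read ``$g$ is $2^c$-stable'' as: whenever $D,D'\in\mathcal D'$ are adjacent, $g(D)$ and $g(D')$ are at distance at most $2^c$ in $\mathcal D$. Hence they can be joined by a chain of at most $2^c$ adjacent datasets. For the base case $c=0$ the image points are adjacent or equal, and the claim is exactly the hypothesis that $f$ is $(\alpha,\varepsilon)$-RDP.

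The key lemma is the following. For distributions $P,Q,R$ and $\beta>1$,
\[
D_\beta(P\|R)\le \frac{\beta-\tfrac12}{\beta-1}\,D_{2\beta}(P\|Q)+D_{2\beta-1}(Q\|R).
\]
To prove it, write
\[
\mathbb E_R\!\big[(P/R)^\beta\big]=\mathbb E_Q\!\big[(P/Q)^\beta\,(Q/R)^{\beta}\cdot R/Q\big]=\mathbb E_Q\!\big[(P/Q)^\beta (Q/R)^{\beta-1}\big].
\]
Apply Cauchy--Schwarz (H\"older with exponents $2,2$) to get
\[
\mathbb E_R\!\big[(P/R)^\beta\big]\le \Big(\mathbb E_Q\big[(P/Q)^{2\beta}\big]\Big)^{1/2}\Big(\mathbb E_Q\big[(Q/R)^{2\beta-2}\big]\Big)^{1/2}.
\]
The first factor equals $\exp\!\big(\tfrac{2\beta-1}{2}D_{2\beta}(P\|Q)\big)$. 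The second equals $\mathbb E_R[(Q/R)^{2\beta-1}]^{1/2}=\exp\!\big(\tfrac{2\beta-2}{2}D_{2\beta-1}(Q\|R)\big)$. Taking logarithms and dividing by $\beta-1$ gives the lemma.

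For the inductive step, fix adjacent $D,D'$ and set $\beta=\alpha/2^c$. Since $g(D)$ and $g(D')$ are at distance at most $2^c$, I pick an intermediate dataset $E$ on a shortest chain that is within distance $2^{c-1}$ of each endpoint. Let $P=f(g(D))$, $Q=f(E)$ and $R=f(g(D'))$. Each half of the chain is a pair at distance at most $2^{c-1}$. Applying the induction hypothesis in the form ``pairs at distance $\le 2^{c-1}$ satisfy $(\alpha/2^{c-1},3^{c-1}\varepsilon)$-RDP'' gives $D_{2\beta}\le 3^{c-1}\varepsilon$ on both halves. Strictly, this is the induction applied with $\alpha$ fixed and the intermediate identity map viewed as $2^{c-1}$-stable, which is legitimate because the hypothesis $\alpha\ge 2^{c}$ needed at level $c-1$ is implied by $\alpha\ge 2^{c+1}$. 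Monotonicity of the order gives $D_{2\beta-1}(Q\|R)\le D_{2\beta}(Q\|R)\le 3^{c-1}\varepsilon$. The lemma then yields
\[
D_\beta(P\|R)\le\Big(\frac{\beta-\tfrac12}{\beta-1}+1\Big)3^{c-1}\varepsilon .
\]

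The remaining point is the constant. We have $\frac{\beta-1/2}{\beta-1}\le 2$ iff $\beta\ge 3/2$, and $\alpha\ge 2^{c+1}$ gives $\beta\ge 2$. So the bracket is at most $3$ and $D_\beta(P\|R)\le 3^c\varepsilon$, as claimed.

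The main obstacle is bookkeeping rather than analysis. One must set up the induction so that the hypothesis applies to arbitrary pairs at distance $2^{c-1}$, not only to images of adjacent points under $g$. The cleanest way is to prove the stronger statement ``$f$ restricted to pairs at distance $\le 2^c$ is $(\alpha/2^c,3^c\varepsilon)$-RDP''. The second point to check is that the order condition $\alpha/2^{k}\ge 2$ holds at every level $k\le c$ of the recursion. If the distance is strictly less than $2^c$, I would pad the chain with repeated datasets, which only lowers the divergences.
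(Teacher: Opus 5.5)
Your proof is correct and is essentially the original argument of \cite{mironov2017renyi}, which the paper cites without reproducing: induction on $c$, the Cauchy--Schwarz (H\"older with $p=q=2$) weak triangle inequality $D_\beta(P\|R)\le \frac{\beta-1/2}{\beta-1}D_{2\beta}(P\|Q)+D_{2\beta-1}(Q\|R)$, and monotonicity of $D_\alpha$ in the order. The strengthened inductive claim you state at the end (all pairs at distance at most $2^k$ in $\mathcal D$, with $\alpha\ge 2^{k+1}$ fixed) is the right way to make the midpoint step rigorous, and it replaces the looser remark about viewing the identity map as $2^{c-1}$-stable.
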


\begin{lemma}
\label{lem:conversion-app}
For each $M$ and a fixed $\alpha > 1$, let $A_M \in \Omega$ be a random variable and let $P_M \in \mathbb{R}$ be a fixed number.
For each $\varepsilon,\beta>0$, let $T_{\varepsilon,\alpha,\beta}\subset \Omega$ satisfy
\begin{equation}
\forall M\qquad
\Bigl(P_M=\varepsilon \;\Longrightarrow\; \mathbb{P}\bigl[A_M \in T_{\varepsilon,\alpha,\beta}\bigr]\le \beta\Bigr).
\end{equation}
Further suppose that, if $\varepsilon_1 \le \varepsilon_2$, then
$T_{\varepsilon_1,\alpha,\beta} \supset T_{\varepsilon_2,\alpha,\beta}$.
Then, for all $M$ and all $\beta>0$,
\begin{equation}
\Pr\!\left\{\,P_M \ge
\sup\{\varepsilon>0 : A_M \in T_{\varepsilon,\alpha,\beta}\}\right\}
\ge 1-\beta.
\end{equation}
\end{lemma}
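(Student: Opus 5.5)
The plan is to prove the statement by contradiction from the monotonicity of the rejection sets. Fix $M$ and $\beta>0$. Write $\widehat\varepsilon := \sup\{\varepsilon>0 : A_M\in T_{\varepsilon,\alpha,\beta}\}$, with the convention $\sup\emptyset = 0$. The bad event is $E := \{P_M < \widehat\varepsilon\}$, and the goal is to show $\mathbb{P}[E]\le\beta$.

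\emph{Step 1: enlarge the bad event.} Assume first that $P_M>0$, so that $\varepsilon_0 := P_M$ is an admissible index. On $E$ the supremum strictly exceeds $\varepsilon_0$. By the definition of the supremum, there is some $\varepsilon' > \varepsilon_0$ with $A_M \in T_{\varepsilon',\alpha,\beta}$. The nesting hypothesis says $\varepsilon_0\le\varepsilon'$ implies $T_{\varepsilon_0,\alpha,\beta}\supset T_{\varepsilon',\alpha,\beta}$, so $A_M\in T_{\varepsilon_0,\alpha,\beta}$. Hence $E\subseteq\{A_M\in T_{P_M,\alpha,\beta}\}$.

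\emph{Step 2: apply the validity hypothesis.} The test's validity hypothesis applies with $\varepsilon = P_M$, since $P_M=\varepsilon$ trivially. It gives $\mathbb{P}[A_M\in T_{P_M,\alpha,\beta}]\le\beta$. Together with Step 1 this yields $\mathbb{P}[E]\le\beta$, so the complement $\{P_M\ge\widehat\varepsilon\}$ has probability at least $1-\beta$. Note that $P_M$ is a fixed number, so the only randomness is in $A_M$; no union bound over $\varepsilon$ is needed. This is exactly why monotonicity is assumed: it lets us reduce to the single test at the true parameter.

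\emph{Step 3: the boundary case $P_M\le 0$.} This is the only real obstacle, because the hypothesis is stated only for $\varepsilon>0$, so $T_{P_M,\alpha,\beta}$ is not defined.
\begin{itemize}
\item If $P_M<0$: I would restrict to $P_M>0$, as in the RDP application (there $P_M$ is a divergence, which is nonnegative). Alternatively, I would extend the family by setting $T_{\varepsilon,\alpha,\beta}:=T_{\varepsilon',\alpha,\beta}$ for some small $\varepsilon'>0$, which preserves nesting.
\item If $P_M=0$: on $E$ we have $\widehat\varepsilon>0$, so $A_M\in T_{\varepsilon',\alpha,\beta}$ for some $\varepsilon'>0$. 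Controlling this event requires the validity hypothesis at $\varepsilon=0$, or a limiting argument as $\varepsilon\downarrow 0$. I would state this mild convention (that the hypothesis holds at $\varepsilon = P_M$, interpreting $T_{0,\alpha,\beta}$ as $\bigcup_{\varepsilon>0}T_{\varepsilon,\alpha,\beta}$) explicitly rather than hide it.
\end{itemize}
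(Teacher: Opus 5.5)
Your argument is correct and is essentially the paper's proof: on the bad event you pick $\varepsilon'>P_M$ with $A_M\in T_{\varepsilon',\alpha,\beta}$, use nesting to get $A_M\in T_{P_M,\alpha,\beta}$, and apply the validity hypothesis at $\varepsilon=P_M$. Your explicit treatment of $P_M\le 0$ goes beyond the paper, which tacitly assumes $P_M>0$ so that $T_{P_M,\alpha,\beta}$ is defined, and restricting to $P_M>0$ is the right fix; drop your alternative of setting $T_{\varepsilon,\alpha,\beta}:=T_{\varepsilon',\alpha,\beta}$ for $\varepsilon\le 0$, since it generally breaks nesting for indices in $(0,\varepsilon')$ and gives no validity guarantee at $\varepsilon=P_M$ anyway.
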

\begin{proof}
We follow a very similar proof as outlined in Lemma 4.7 of \cite{steinke2023privacy}, just using the monotonicity of RDP and subsequently the null hypothesis's rejection region. Fix an output of $A_M$ and let $P_M < \sup\{\varepsilon>0 : A_M \in T_{\varepsilon,\alpha,\beta}\}$. Then there is a $\varepsilon \ge P_M$ with $A_M \in T_{\varepsilon,\alpha,\beta}$, and therefore 
\begin{align}
    \label{eqn:union}
    A_M \in \bigcup_{\varepsilon \ge P_M} T_{\varepsilon,\alpha,\beta} = T_{P_M,\alpha,\beta}.
\end{align}
By monotonicity of the rejection region. Thus, 
\begin{align}
\Pr\!\left[P_M < \sup\{\varepsilon > 0 : A_M \in T_{\varepsilon,\alpha, \beta}\}\right]
\le
\Pr\bigl[A_M \in T_{P_M,\alpha,\beta}\bigr]
\le \beta.
\end{align}

For the above to hold, we assume that $\alpha > 1$ is fixed. 
\end{proof}

\section{Additional Experimental Results}
\label{sec:more-experiments}

\begin{figure*}
    \centering
    \includegraphics[width=1.0\linewidth]{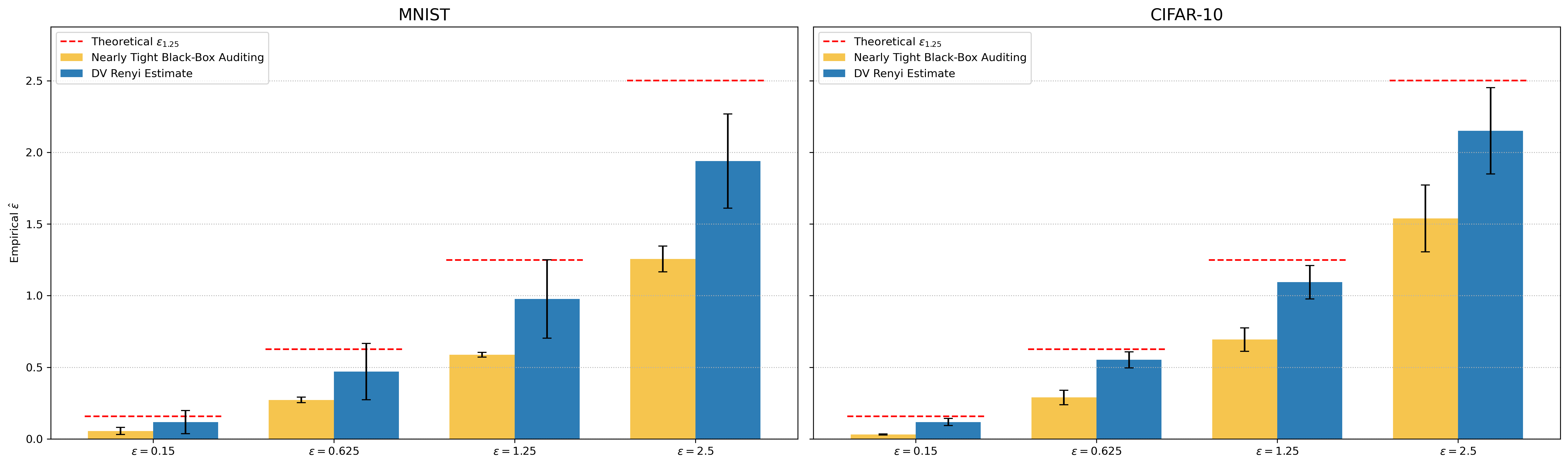}
    \caption{Graph for auditing at $\alpha = 1.25$ for CIFAR-10 and MNIST datasets on a CNN}
    \label{fig:125worst_case}
\end{figure*}

\begin{figure*}
    \centering
    \includegraphics[width=1.0\linewidth]{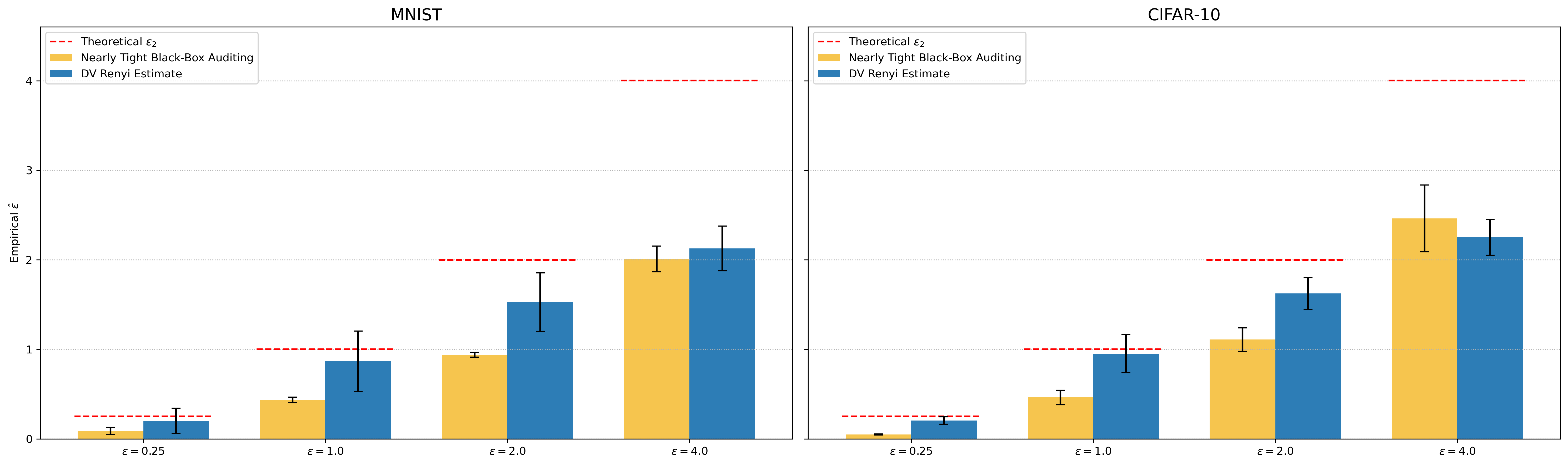}
    \caption{Graph for auditing at $\alpha = 2.0$ for CIFAR-10 and MNIST datasets on a CNN}
    \label{fig:2worst_case}
\end{figure*}

\begin{table}[t]
\centering
\caption{Empirical RDP audits at $\alpha=1.25$. Same as in Section~\ref{sec:experiments}}
\label{tab:alpha125-1.0}
\begin{tabular}{llcc}
\toprule
Dataset & Target $\varepsilon_\alpha$ & SOTA black-box \cite{muthu2024nearly} & DV-R\'enyi \\
\midrule
\multirow{5}{*}{MNIST} & 0.15625 & $0.056 \pm 0.025$ & $\mathbf{0.115} \pm 0.080$ \\
 & 0.625 & $0.272 \pm 0.020$ & $\mathbf{0.469} \pm 0.197$ \\
 & 1.25 & $0.588 \pm 0.016$ & $\mathbf{0.995} \pm 0.289$ \\
 & 2.5 & $1.256 \pm 0.091$ & $\mathbf{2.239} \pm 0.473$ \\
 & 6.25 & $3.325 \pm 0.218$ & $\mathbf{3.884} \pm 0.327$ \\
\midrule
\multirow{5}{*}{CIFAR-10} & 0.15625 & $0.031 \pm 0.004$ & $\mathbf{0.118} \pm 0.025$ \\
 & 0.625 & $0.289 \pm 0.050$ & $\mathbf{0.551} \pm 0.056$ \\
 & 1.25 & $0.693 \pm 0.082$ & $\mathbf{1.095} \pm 0.120$ \\
 & 2.5 & $1.539 \pm 0.234$ & $\mathbf{2.248} \pm 0.344$ \\
 & 6.25 & $4.128 \pm 0.336$ & $\mathbf{4.577} \pm 0.430$ \\
\bottomrule
\end{tabular}
\end{table}

\begin{table}[t]
\centering
\caption{Empirical RDP audits at $\alpha=1.25$, with Dataset size: 100 samples.}
\label{tab:alpha125-100samples}
\begin{tabular}{llcc}
\toprule
Dataset & Target $\varepsilon_\alpha$ & SOTA black-box \cite{muthu2024nearly} & DV-R\'enyi \\
\midrule
\multirow{5}{*}{MNIST} & 0.15625 & $0.086 \pm 0.042$ & $\mathbf{0.152} \pm 0.061$ \\
 & 0.625 & $\mathbf{0.506} \pm 0.124$ & $0.184 \pm 0.029$ \\
 & 1.25 & $\mathbf{0.875} \pm 0.075$ & $0.172 \pm 0.014$ \\
 & 2.5 & $\mathbf{1.921} \pm 0.249$ & $0.176 \pm 0.011$ \\
 & 6.25 & $\mathbf{4.685} \pm 0.370$ & $0.177 \pm 0.006$ \\
\midrule
\multirow{5}{*}{CIFAR-10} & 0.15625 & $0.036 \pm 0.010$ & $\mathbf{0.147} \pm 0.034$ \\
 & 0.625 & $0.323 \pm 0.039$ & $\mathbf{0.486} \pm 0.053$ \\
 & 1.25 & $\mathbf{0.730} \pm 0.085$ & $0.649 \pm 0.052$ \\
 & 2.5 & $\mathbf{1.631} \pm 0.179$ & $0.767 \pm 0.045$ \\
 & 6.25 & $\mathbf{4.341} \pm 0.368$ & $0.848 \pm 0.045$ \\
\bottomrule
\end{tabular}
\end{table}

\begin{table}[t]
\centering
\caption{Empirical RDP audits at $\alpha=1.25$, with Dataset size: 1000 samples.}
\label{tab:alpha125-1000samples}
\begin{tabular}{llcc}
\toprule
Dataset & Target $\varepsilon_\alpha$ & SOTA black-box \cite{muthu2024nearly} & DV-R\'enyi \\
\midrule
\multirow{5}{*}{MNIST} & 0.15625 & $0.095 \pm 0.057$ & $\mathbf{0.155} \pm 0.068$ \\
 & 0.625 & $\mathbf{0.493} \pm 0.104$ & $0.185 \pm 0.034$ \\
 & 1.25 & $\mathbf{0.863} \pm 0.104$ & $0.167 \pm 0.015$ \\
 & 2.5 & $\mathbf{1.844} \pm 0.154$ & $0.172 \pm 0.009$ \\
 & 6.25 & $\mathbf{4.478} \pm 0.310$ & $0.174 \pm 0.007$ \\
\midrule
\multirow{5}{*}{CIFAR-10} & 0.15625 & $0.031 \pm 0.004$ & $\mathbf{0.118} \pm 0.025$ \\
 & 0.625 & $0.289 \pm 0.050$ & $\mathbf{0.551} \pm 0.056$ \\
 & 1.25 & $0.693 \pm 0.082$ & $\mathbf{1.095} \pm 0.120$ \\
 & 2.5 & $1.539 \pm 0.234$ & $\mathbf{2.248} \pm 0.344$ \\
 & 6.25 & $4.128 \pm 0.336$ & $\mathbf{4.577} \pm 0.430$ \\
\bottomrule
\end{tabular}
\end{table}

\begin{table}[t]
\centering
\caption{Empirical RDP audits at $\alpha=1.25$, with Max grad norm: 0.1.}
\label{tab:alpha125-0.1}
\begin{tabular}{llcc}
\toprule
Dataset & Target $\varepsilon_\alpha$ & SOTA black-box \cite{muthu2024nearly} & DV-R\'enyi \\
\midrule
\multirow{5}{*}{MNIST} & 0.15625 & $0.084 \pm 0.044$ & $\mathbf{0.147} \pm 0.084$ \\
 & 0.625 & $0.340 \pm 0.058$ & $\mathbf{0.394} \pm 0.074$ \\
 & 1.25 & $\mathbf{0.729} \pm 0.071$ & $0.475 \pm 0.052$ \\
 & 2.5 & $\mathbf{1.592} \pm 0.139$ & $0.512 \pm 0.036$ \\
 & 6.25 & $\mathbf{4.076} \pm 0.357$ & $0.524 \pm 0.025$ \\
\midrule
\multirow{5}{*}{CIFAR-10} & 0.15625 & $0.046 \pm 0.031$ & $\mathbf{0.138} \pm 0.023$ \\
 & 0.625 & $0.301 \pm 0.032$ & $\mathbf{0.556} \pm 0.062$ \\
 & 1.25 & $0.758 \pm 0.082$ & $\mathbf{1.018} \pm 0.130$ \\
 & 2.5 & $\mathbf{1.628} \pm 0.126$ & $1.369 \pm 0.142$ \\
 & 6.25 & $\mathbf{4.342} \pm 0.329$ & $1.692 \pm 0.059$ \\
\bottomrule
\end{tabular}
\end{table}

\begin{table}[t]
\centering
\caption{Empirical RDP audits at $\alpha=1.25$, with Max grad norm: 10.0.}
\label{tab:alpha125-10.0}
\begin{tabular}{llcc}
\toprule
Dataset & Target $\varepsilon_\alpha$ & SOTA black-box \cite{muthu2024nearly} & DV-R\'enyi \\
\midrule
\multirow{5}{*}{MNIST} & 0.15625 & $0.001 \pm 0.001$ & $\mathbf{0.006} \pm 0.007$ \\
 & 0.625 & $0.033 \pm 0.026$ & $\mathbf{0.098} \pm 0.112$ \\
 & 1.25 & $0.035 \pm 0.022$ & $\mathbf{0.070} \pm 0.086$ \\
 & 2.5 & $0.035 \pm 0.016$ & $\mathbf{0.118} \pm 0.074$ \\
 & 6.25 & $0.059 \pm 0.028$ & $\mathbf{4.764} \pm 9.191$ \\
\midrule
\multirow{5}{*}{CIFAR-10} & 0.15625 & $0.002 \pm 0.001$ & $\mathbf{0.005} \pm 0.009$ \\
 & 0.625 & $0.065 \pm 0.030$ & $\mathbf{0.166} \pm 0.092$ \\
 & 1.25 & $0.191 \pm 0.012$ & $\mathbf{0.456} \pm 0.118$ \\
 & 2.5 & $0.652 \pm 0.035$ & $\mathbf{1.657} \pm 0.275$ \\
 & 6.25 & $2.445 \pm 0.130$ & $\mathbf{5.577} \pm 2.545$ \\
\bottomrule
\end{tabular}
\end{table}


\begin{table}[t]
\centering
\caption{Empirical RDP audits at $\alpha=2$, same as in Section~\ref{sec:experiments}.}
\label{tab:alpha2-1.0}
\begin{tabular}{llcc}
\toprule
Dataset & Target $\varepsilon_\alpha$ & SOTA black-box \cite{muthu2024nearly} & DV-R\'enyi \\
\midrule
\multirow{5}{*}{MNIST} & 0.25 & $0.089 \pm 0.040$ & $\mathbf{0.202} \pm 0.142$ \\
 & 1.0 & $0.436 \pm 0.032$ & $\mathbf{0.867} \pm 0.339$ \\
 & 2.0 & $0.940 \pm 0.025$ & $\mathbf{1.527} \pm 0.325$ \\
 & 4.0 & $2.010 \pm 0.145$ & $\mathbf{2.127} \pm 0.250$ \\
 & 10.0 & $\mathbf{5.320} \pm 0.349$ & $2.650 \pm 0.159$ \\
\midrule
\multirow{5}{*}{CIFAR-10} & 0.25 & $0.050 \pm 0.007$ & $\mathbf{0.206} \pm 0.041$ \\
 & 1.0 & $0.463 \pm 0.080$ & $\mathbf{0.953} \pm 0.213$ \\
 & 2.0 & $1.109 \pm 0.131$ & $\mathbf{1.624} \pm 0.178$ \\
 & 4.0 & $\mathbf{2.463} \pm 0.374$ & $2.250 \pm 0.200$ \\
 & 10.0 & $\mathbf{6.605} \pm 0.538$ & $2.751 \pm 0.170$ \\
\bottomrule
\end{tabular}
\end{table}

\begin{table}[t]
\centering
\caption{Empirical RDP audits at $\alpha=2$, with Dataset size: 100 samples.}
\label{tab:alpha2-100samples}
\begin{tabular}{llcc}
\toprule
Dataset & Target $\varepsilon_\alpha$ & SOTA black-box \cite{muthu2024nearly} & DV-R\'enyi \\
\midrule
\multirow{5}{*}{MNIST} & 0.25 & $0.138 \pm 0.067$ & $\mathbf{0.407} \pm 0.335$ \\
 & 1.0 & $0.810 \pm 0.198$ & $\mathbf{1.512} \pm 0.662$ \\
 & 2.0 & $1.399 \pm 0.120$ & $\mathbf{1.910} \pm 0.390$ \\
 & 4.0 & $\mathbf{3.074} \pm 0.398$ & $2.360 \pm 0.294$ \\
 & 10.0 & $\mathbf{7.496} \pm 0.592$ & $2.796 \pm 0.150$ \\
\midrule
\multirow{5}{*}{CIFAR-10} & 0.25 & $0.058 \pm 0.015$ & $\mathbf{0.257} \pm 0.048$ \\
 & 1.0 & $0.517 \pm 0.063$ & $\mathbf{1.138} \pm 0.128$ \\
 & 2.0 & $1.169 \pm 0.136$ & $\mathbf{1.829} \pm 0.098$ \\
 & 4.0 & $\mathbf{2.609} \pm 0.286$ & $2.322 \pm 0.146$ \\
 & 10.0 & $\mathbf{6.945} \pm 0.589$ & $2.845 \pm 0.185$ \\
\bottomrule
\end{tabular}
\end{table}

\begin{table}[t]
\centering
\caption{Empirical RDP audits at $\alpha=2$, with Dataset size: 1000 samples.}
\label{tab:alpha2-1000samples}
\begin{tabular}{llcc}
\toprule
Dataset & Target $\varepsilon_\alpha$ & SOTA black-box \cite{muthu2024nearly} & DV-R\'enyi \\
\midrule
\multirow{5}{*}{MNIST} & 0.25 & $0.151 \pm 0.091$ & $\mathbf{0.381} \pm 0.262$ \\
 & 1.0 & $0.789 \pm 0.166$ & $\mathbf{1.272} \pm 0.427$ \\
 & 2.0 & $1.380 \pm 0.166$ & $\mathbf{1.822} \pm 0.338$ \\
 & 4.0 & $\mathbf{2.950} \pm 0.246$ & $2.218 \pm 0.242$ \\
 & 10.0 & $\mathbf{7.165} \pm 0.495$ & $2.618 \pm 0.171$ \\
\midrule
\multirow{5}{*}{CIFAR-10} & 0.25 & $0.050 \pm 0.007$ & $\mathbf{0.206} \pm 0.041$ \\
 & 1.0 & $0.463 \pm 0.080$ & $\mathbf{0.953} \pm 0.213$ \\
 & 2.0 & $1.109 \pm 0.131$ & $\mathbf{1.624} \pm 0.178$ \\
 & 4.0 & $\mathbf{2.463} \pm 0.374$ & $2.250 \pm 0.200$ \\
 & 10.0 & $\mathbf{6.605} \pm 0.538$ & $2.751 \pm 0.170$ \\
\bottomrule
\end{tabular}
\end{table}

\begin{table}[t]
\centering
\caption{Empirical RDP audits at $\alpha=2$, with Max grad norm: 0.1.}
\label{tab:alpha2-0.1}
\begin{tabular}{llcc}
\toprule
Dataset & Target $\varepsilon_\alpha$ & SOTA black-box \cite{muthu2024nearly} & DV-R\'enyi \\
\midrule
\multirow{5}{*}{MNIST} & 0.25 & $0.134 \pm 0.070$ & $\mathbf{0.175} \pm 0.070$ \\
 & 1.0 & $\mathbf{0.543} \pm 0.094$ & $0.265 \pm 0.038$ \\
 & 2.0 & $\mathbf{1.166} \pm 0.114$ & $0.284 \pm 0.028$ \\
 & 4.0 & $\mathbf{2.548} \pm 0.222$ & $0.292 \pm 0.022$ \\
 & 10.0 & $\mathbf{6.522} \pm 0.571$ & $0.296 \pm 0.012$ \\
\midrule
\multirow{5}{*}{CIFAR-10} & 0.25 & $0.073 \pm 0.050$ & $\mathbf{0.186} \pm 0.028$ \\
 & 1.0 & $\mathbf{0.481} \pm 0.052$ & $0.304 \pm 0.017$ \\
 & 2.0 & $\mathbf{1.212} \pm 0.131$ & $0.329 \pm 0.011$ \\
 & 4.0 & $\mathbf{2.604} \pm 0.202$ & $0.342 \pm 0.007$ \\
 & 10.0 & $\mathbf{6.947} \pm 0.527$ & $0.349 \pm 0.002$ \\
\bottomrule
\end{tabular}
\end{table}

\begin{table}[t]
\centering
\caption{Empirical RDP audits at $\alpha=2$, with Max grad norm: 10.0.}
\label{tab:alpha2-10.0}
\begin{tabular}{llcc}
\toprule
Dataset & Target $\varepsilon_\alpha$ & SOTA black-box \cite{muthu2024nearly} & DV-R\'enyi \\
\midrule
\multirow{5}{*}{MNIST} & 0.25 & $0.001 \pm 0.002$ & $\mathbf{0.014} \pm 0.011$ \\
 & 1.0 & $0.052 \pm 0.042$ & $\mathbf{0.164} \pm 0.184$ \\
 & 2.0 & $0.056 \pm 0.035$ & $\mathbf{0.131} \pm 0.148$ \\
 & 4.0 & $0.056 \pm 0.025$ & $\mathbf{0.210} \pm 0.175$  \\
\midrule
\multirow{5}{*}{CIFAR-10} & 0.25 & $0.004 \pm 0.002$ & $\mathbf{0.012} \pm 0.019$ \\
 & 1.0 & $0.104 \pm 0.049$ & $\mathbf{0.324} \pm 0.192$ \\
 & 2.0 & $0.306 \pm 0.019$ & $\mathbf{2.200} \pm 2.979$ \\
 & 4.0 & $1.044 \pm 0.056$ & $\mathbf{6.983} \pm 3.233$ \\
\bottomrule
\end{tabular}
\end{table}

This section in the appendix
provides additional interpretation of the experimental results reported in Section~\ref{sec:experiments} (in main body) 
and the accompanying figures.
The primary purpose of these experiments is not to propose a new attack on DP-SGD, but rather to validate the theoretical
auditing guarantees developed in Section~\ref{sec:theory} and~\ref{sec:experiments} under realistic machine learning workloads.
In particular, the experiments are designed to answer the following questions:

\begin{itemize}
    \item Do DV-based R\'enyi divergence estimators produce empirically meaningful lower bounds on privacy loss~\citep{Alabi26}?
    \item How closely do empirical audits track the theoretically claimed RDP guarantees?
    \item How does estimator choice affect auditing accuracy, especially at small privacy budgets?
\end{itemize}

Across all datasets and models, the experimental findings are consistent with the predictions of our theory. It took around 100 GPU hours on A100 to audit an model trained on MNIST data and 500 GPU hours for a model trained on CIFAR-10 data.

\paragraph{Worst-case initializations.}
A recurring theme in prior work is that DP-SGD privacy guarantees must hold even for adversarially chosen initial parameters.
To stress-test our auditing framework, we therefore follow the experimental protocol of
\cite{AD24, muthu2024nearly} and evaluate models initialized using worst-case pretraining strategies. All of our experiments occur in this setting.
Despite the adversarial initialization, the DV-based R\'enyi estimator consistently produces nontrivial lower bounds
that closely track the claimed theoretical privacy levels.
This behavior is particularly notable at smaller values of~$\varepsilon$, where auditing is statistically most challenging.
These results empirically corroborate the distribution-free lower confidence bounds established in our theory
and demonstrate that our estimator remains stable even under worst-case conditions.

\paragraph{Comparison of estimators.}
We compare the following approaches to estimating R\'enyi divergence from observed losses:
(i) our DV-based variational estimator,
(ii) nearly tight black-box auditing results from prior work.

Across all experiments, the DV-based estimator achieves a favorable balance between bias and variance.
While prior black-box methods can be overly conservative at small $\varepsilon$,
the DV estimator yields tighter empirical lower bounds that remain statistically valid for low $\varepsilon$.
This behavior is especially pronounced in the low-$\varepsilon$ regime, where small estimation errors
translate into large relative differences in privacy guarantees.

These trends are consistent with our finite-sample analysis,
which predicts that the DV estimator can achieve near-optimal accuracy given a sufficiently expressive critic class.

\subsection{DV R\'enyi Model}
In our auditing procedure we use a DV R\'enyi model to calculate a divergence estimate that serves as an empirical RDP estimate. For this model, consider a dataset (consisting of final model losses trained with DP-SGD) with size $n$. Our neural networks use two intermediate layers of 100 nodes.

At $\alpha = 1.25$, we use 750 epochs and lr = 7e-5 for CIFAR-10 and 500 epochs with lr = 2e-4 for MNIST, and a batch size of 400. We use an EMA rate of 0.99 and our neural network is structured with two intermediate layers of 100 nodes with a ReLU activation function. 400 (80$\%$) of the loss samples are used in the training set and the last 100 (20$\%$) of the loss samples are used in the validation set. We scale our DV estimate $R_\alpha$ by $\alpha$ so $D_\alpha = \alpha R_\alpha$. At $\alpha = 2$, all the parameters are the same besides a slight decrease of the iterative learning, setting lr = 1.25e-4 for MNIST and 350 training epochs for CIFAR-10. As $\alpha$ increases, the model becomes increasingly sensitive and overall learning of the model needs to be decreased to avoid high MSE due to the divergence's exponential nature. To apply our theorems to our experimental settings and results, consistency results for our neural networks satisfying Theorem \ref{thm:finite-sample-dv-renyi}, Theorem~\ref{thm:dv-renyi-lower-bound-app} can be found in \cite{birrell2021variational}.


\subsection{Tightness lost in GDP to RDP conversions}

A concern when comparing RDP audits to prior black-box auditing methods is that converting an audit through intermediate privacy notions can introduce unnecessary slack. In particular, \cite{muthu2024nearly} produces a binary classifier that distinguishes whether a model was trained on dataset $D$ or its neighboring dataset $D'$. This classifier is summarized by its true-positive rate $\mathrm{TPR}$ and false-positive rate $\mathrm{FPR}$, which can then be used to estimate a $\mu$-GDP lower bound. However, converting this estimate to GDP and then to RDP may lose tightness. Therefore, we evaluate a direct RDP baseline from the binary attack itself. The binary distinguisher induces two Bernoulli distributions: one corresponding to the attack output under $D$, with success probability $\mathrm{TPR}$, and one corresponding to the attack output under $D'$, with success probability $\mathrm{FPR}$. Therefore, instead of passing through an intermediate GDP estimate, we directly compute the order-$\alpha$ Rényi divergence
\[
\hat{\varepsilon}
=
D_{\alpha}
\left(
\mathrm{Bern}(\mathrm{TPR})
\,\middle\|\,
\mathrm{Bern}(\mathrm{FPR})
\right).
\]
Although this provides a more direct RDP lower bound, the Bernoulli distributions from $\mathrm{TPR}$ and $\mathrm{FPR}$ are not robust to R\'enyi divergence estimation, shown in Table~\ref{tab:rdp-direct-comparison}.

\begin{table}[t]
\centering
\caption{Comparison of empirical RDP lower bounds at $\alpha=2$, with a direct $\hat{\varepsilon}=D_{\alpha}(\mathrm{Bern(TPR)}\Vert \mathrm{Bern(FPR)})$ estimate for previous SOTA.}
\label{tab:rdp-direct-comparison}
\begin{tabular}{lccc}
\toprule
Dataset &
Theor. $\varepsilon$ at $\alpha=2$ &
$\hat{\varepsilon}=D_{\alpha}(\mathrm{Bern(TPR)}\Vert \mathrm{Bern(FPR)})$ &
$\hat{\varepsilon}$-RDP DV \\
\midrule
\textbf{CIFAR-10} & 0.25 & $0.097 \pm 0.059$ & $\mathbf{0.206} \pm 0.041$ \\
 & 1.0 & $0.375 \pm 0.11$ & $\mathbf{0.953} \pm 0.213$ \\
 & 2.0 & $0.685 \pm 0.124$ & $\mathbf{1.624} \pm 0.178$ \\
 & 4.0 & $1.097 \pm 0.134$ & $\mathbf{2.25} \pm 0.2$ \\
 & 10.0 & $1.446 \pm 0.239$ & $\mathbf{2.751} \pm 0.17$ \\
\midrule
\textbf{MNIST} & 0.25 & $0.115 \pm 0.064$ & $\mathbf{0.202} \pm 0.142$ \\
 & 1.0 & $0.35 \pm 0.115$ & $\mathbf{0.867} \pm 0.339$ \\
 & 2.0 & $0.618 \pm 0.135$ & $\mathbf{1.527} \pm 0.325$ \\
 & 4.0 & $1.022 \pm 0.14$ & $\mathbf{2.127} \pm 0.25$ \\
 & 10.0 & $1.444 \pm 0.186$ & $\mathbf{2.65} \pm 0.159$ \\
\bottomrule
\end{tabular}
\end{table}